\documentclass{article}

\usepackage{microtype}
\usepackage{graphicx}
\usepackage{subcaption}
\usepackage{booktabs} 

\usepackage{hyperref}

\usepackage[accepted]{icml2026}

\usepackage{amsmath}
\usepackage{amssymb}
\usepackage{mathtools}
\usepackage{amsthm}
\usepackage{multirow}

\usepackage[capitalize,noabbrev]{cleveref}

\theoremstyle{plain}
\newtheorem{theorem}{Theorem}[section]
\newtheorem{proposition}[theorem]{Proposition}
\newtheorem{lemma}[theorem]{Lemma}

\theoremstyle{definition}
\newtheorem{definition}[theorem]{Definition}
\newtheorem{assumption}[theorem]{Assumption}
\theoremstyle{remark}

\usepackage{subcaption}

\usepackage[disable,textsize=tiny]{todonotes}
\newcommand{\RETURN}{\textbf{return} }
\icmltitlerunning{Inference Time Optimization with Confidence Dynamics}

\begin{document}

\twocolumn[
  \icmltitle{Inference Time Optimization with Confidence Dynamics} 



\icmlsetsymbol{equal}{*}

\begin{icmlauthorlist}
\icmlauthor{Yu Wang}{acc}
\icmlauthor{Minghao Liu}{acc}
\icmlauthor{Jiayun Wang}{acc}
\icmlauthor{Jinrui Huang}{acc}
\icmlauthor{Ankit Shah}{acc}
\icmlauthor{Wei Wei}{acc}
\end{icmlauthorlist}

\icmlaffiliation{acc}{Center for Advanced AI, Accenture}

\icmlcorrespondingauthor{Yu Wang}{feather1014@gmail.com}

  \icmlkeywords{Machine Learning, ICML}

  \vskip 0.3in
]



\printAffiliationsAndNotice{}  

\begin{abstract}
Inference time optimization techniques, such as repeated sampling, have significantly advanced the reasoning capabilities of Large Language Models (LLMs). However, the critical role of model uncertainty remains largely underexplored in these optimization strategies. In this paper, we investigate the dynamics of confidence along reasoning trajectories and for first time reveal a surprising and unique pattern: correct answer traces tend to exhibit confidence improvement over time (positive confidence gain), while incorrect traces show attenuated or declining confidence as reasoning proceeds. Based on this observation, we propose Confidence Dynamic Gain (CDG) based voting, which incorporates how the confidence trajectory of the response evolves along the reasoning chain. Experiments across four open-source architectures (DeepSeek-R1, gpt-oss, Gemma-3, Qwen-QwQ) on the AIME24/25, HMMT25, and BRUMO25 benchmarks demonstrate that CDG yields a significant performance boost over baselines. These results demonstrate that our method provides a robust discriminative signal for improving answer selection in LLM reasoning. We also provide theoretical insights for this phenomenon. Code will be released at \url{https://github.com/Accenture/CDG.git}.  
\end{abstract}

\section{Introduction}

Large Language Models (LLMs) have demonstrated remarkable reasoning capabilities across mathematical problem-solving, code generation, and scientific reasoning tasks \cite{deepseekr1,grok4,gptoss,gemma_2025,qwq32b,yang2025qwen3}. One of the key strategies for improving inference time reasoning accuracy is Best-of-N sampling, where multiple reasoning traces are generated for each given question, and an aggregation strategy selects the final answer \cite{wang2022self, snell2024scaling}. The simplest and most widely adopted approach is majority voting  \cite{wang2022self}, which selects the answer appearing most frequently among traces.

Despite its simplicity and effectiveness, majority voting treats all traces equally regardless of their quality and uncertainty. Recent state-of-the-art work has explored the incorporation of model confidence into the voting process, using metrics such as sequence-level perplexity or average token probability to weight votes \cite{kang2025scalable, fu2025deepconf}. However, these methods rely on \emph{static} confidence measures that aggregate information across an entire trace, potentially obscuring valuable signals about how the model's certainty evolves within reasoning chains.

In this work, we investigate the \emph{dynamics} of model confidence along the reasoning trajectory. Through extensive experiments across four mainstream LLM architectures---OpenAI gpt-oss \cite{gptoss}, DeepSeek-R1 \cite{deepseekr1}, Google Gemma \cite{gemma_2025}, and Qwen-QwQ \cite{qwq32b}---we uncover a striking empirical pattern: \textbf{correct reasoning traces exhibit systematically higher confidence gains from the beginning to the end of the trace compared to incorrect traces}. Specifically, when we partition each trace into position-normalized bins and compare confidence levels between the initial (head) and final (tail) portions, correct traces show positive averaged confidence improvement at the end of the reasoning in comparison to the reasoning start, while incorrect traces display attenuated or negative end-start confidence difference.

This observation leads us to ponder whether such a signal provides additional information to the existing confidence measures for sampling and why. To test our hypothesis, we propose \textbf{Confidence Dynamic Gain (CDG) based Voting}. Rather than relying solely on average confidence or answer frequency, CDG explicitly incorporates the confidence dynamic gain ---the difference between answer's tail and head confidence---as a discriminative side information for answer selection. Inspired by existing voting method \cite{fu2025deepconf,kang2025scalable}, our answer voting function elegantly integrates three signals: (1) the count of traces for each answer, (2) the mean confidence across traces, and (3) confidence dynamic gain. Surprisingly, we observe that by simply amplifying such a confidence dynamic signal, CDG achieves strong performance boost compared to the state-of-the-art inference time answer selection approaches.

While we are open to other legitimate explanations, we hypothesize that the training dynamics of Group Relative Policy Optimization (GRPO) \cite{shao2024deepseekmath}, a well-adopted algorithm to train LLMs, could potentially be the reason behind. GRPO has reinforced tokens proportional to their frequency within the group. In correct traces, while the initial reasoning paths are diverse (low frequency), the final answers strictly converge to the same ground truth (high frequency). In contrast, incorrect traces are subject to negative confidence suppression owing to diluted reasoning paths. With mild assumptions, we show that this \textbf{concentration at the tail} for correct traces drives a distinctively sharp rise in confidence—creating the head-to-tail gap that our method exploits at inference time. We hypothesize that the insights here generalize to any advantage-weighted policy gradient with verifiable reward. Our main contributions are:
\begin{itemize}
\vspace{-0.3cm}
    \item We identify an empirical phenomenon: correct reasoning traces exhibit systematically higher confidence gains along the reasoning trajectories than incorrect traces across diverse LLM architectures.
    \item We propose CDG, a novel voting method that leverages such confidence dynamics for improved answer selection, generalizing existing confidence-based approaches and majority voting.
    \item We provide theoretical analysis explaining why GRPO-trained models exhibit such discriminative confidence gradients, connecting CDG to the verifiable reward and training policy behind.
    \item We demonstrate empirical improvements on competitive mathematics benchmarks, showing that CDG outperforms state-of-the-art baselines.
\end{itemize}

\section{Related Work}

{\bf Inference time scaling.}  
Current LLMs increasingly succeed by allocating very large amounts of reasoning at inference, a paradigm we call {\it inference time scaling} \cite{snell2024scaling}. Two major directions include: a) Increasing reasoning trajectory: previous work like Chain-of-Thought  \cite{wei2022chain} proposes to scale reasoning steps by lengthening thinking steps in a single reasoning trajectory. Later models, such as DeepSeek-R1 \cite{guo2025deepseek}, Grok-4 \cite{grok4} and Qwen3 \cite{yang2025qwen3}, rely on RL to facilitate this form of inference time scaling. b) Best-of-N setting (this paper's focus): Parallel generation is scaled by increasing the number of trajectories and aggregating them. Self-Consistency \cite{wang2022self} and Best-of-N \cite{snell2024scaling,kang2025scalable} sample multiple candidates and select via voting or a score. 
Some rely on reward models such as outcome reward models \cite{cobbe2021training} and process reward models \cite{uesato2022solving}. 
More efficient variants like Dynamic Voting \cite{xue2023dynamic} and ranked voting \cite{wang2025ranked}, reduce the required sample count while preserving accuracy. Our proposed CDG aligns with this line of research, aiming to improve inference time accuracy through optimized answer selection.

{\bf Position plays a role in LLM responses}. Transformers do not inherently model order, so it is critical that positional information is injected through positional encodings  (e.g., RoPE \cite{su2024roformer} and other absolute/relative schemes), which can directly shape attention allocation and, consequently, generation behavior \cite{su2024roformer,shaw2018self}. For example, these schemes often lead to position related structural biases like the ``Attention Sink'' phenomenon~\cite{xiao2023efficient,kang2025see}, where massive attention scores are disproportionately allocated to the first token regardless of relevance. Previous work treat these positional artifacts as mechanical features to be managed through architectural improvements \cite{su2025kvsink,qiu2025gated}. Beyond attention-level effects, token position also shapes how models utilize their context: \citet{liu2024lost} reports a ``lost in the middle'' accuracy curve in long-context question answering, with performance degrading sharply when relevant information is placed in the middle of the input, indicating that position systematically biases what LLMs attend to and reproduce. Others also study the position biases in multi-modality settings \cite{tang2025seeing,zhao2025mca}. Still, the utility of position-dependent signals for inference-time scaling remains underexplored. In this paper, our CDG also leverages position information from the trajectories, exploiting certainty dynamics to guide inference-time scaling for Best-of-N selection.

{\bf Confidence of response traces.}   Recent state-of-the-art finds estimating the response confidence during inference improves the model reliability. 
Various methods estimate model confidence, such as confidence token routing \cite{chuang2024learning} and selective generation \cite{ren2023self}.
\citet{kang2025scalable} proposes global confidence, which is computed at the sequence level and applied post hoc to rank or select among completed candidates, showing that combining multi-sample reasoning with confidence-aware selection can outperform majority voting while using fewer generated tokens. DeepConf \cite{fu2025deepconf} introduces a local confidence signal that is updated along each trajectory, triggering on-the-fly pruning of low-confidence traces for more token-efficient parallel generation and higher accuracy. In contrast to these scoring methods that aggregate signals over a full trajectory~\cite{fu2025deepconf,han2025mind}, our proposed CDG additionally incorporates dynamic confidence information to improve the final voting.

\begin{figure*}[!t]
    \centering
   \includegraphics[width=0.9\linewidth]{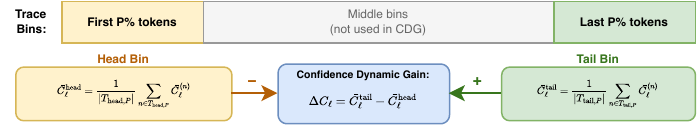}
    \caption{Confidence Dynamic Gain (CDG)  computation, i.e., $\Delta C_{\ell}$. Each trace with $T$ tokens is partitioned into $N$ position-normalized bins. The head bin confidence $\bar{C}_{\ell}^{\text{head}}$ averages over the first $P\%$ of positions, and the tail bin confidence $\bar{C}_{\ell}^{\text{tail}}$ averages over the last $P\%$. The CDG $\Delta C_{\ell}$ captures how confidence evolves from reasoning start to conclusion.}
    \label{fig:flowchart}
    \vspace{-0.5cm}
\end{figure*}
\section{Method}
We are motivated to exploit the hidden confidence dynamic in the autoregressive generation trajectory for better LLM reasoning. Through extensive experiments using mainstream open source LLM architecture, we observe an interesting pattern in such confidence dynamics of the output sequences: when we have sampled multiple traces/answers for a single question, the correct traces always tend to have larger ``tail-start" confidence gain throughout the reasoning trajectory than wrong traces. To test whether such a phenomenon provides any useful information for sampling, we propose a novel method named Confidence Dynamic Gain (CDG) based voting. We demonstrate that CDG is able to leverage this unique pattern and improve sampling accuracy by suppressing hallucinated answers.

\subsection{Preliminaries}
The standard framework for Large Language Models relies on the Transformer architecture \cite{transformer} to generate text autoregressively. The process transforms an input sequence $x = (x_1, \dots, x_t, \dots)$ into a target sequence $y = (y_1, \dots, y_t, \dots y_T)$ token by token. For any given position $t$ in the output sequence, the model outputs unnormalized $\text{logit}_t \in \mathbb{R}^V$ (where $V = |\mathcal{V}|$ denotes the vocabulary size). These logits are then mapped to a valid probability distribution defined here as $p(y_t|x, y_{<t}) \in [0, 1]^V$, which captures the model's estimated likelihood for the next token $y_t$ based on the preceding context.

{\bf Confidence metric based on Kullback-Leibler (KL) Divergence.} By following the definition in \cite{kang2025scalable} we first define the token level confidence metric as the KL divergence between the token distribution and the uniform distribution at the $t$ token position:
{\small{
\begin{align}
C_t^* &= \text{KL}(U \parallel p(y_t|x, y_{<t})) = \sum_{j=1}^{V} \frac{1}{V} \log \left( \frac{1/V}{p(y_t=j|x, y_{<t})} \right) \nonumber \\
&= - \frac{1}{V} \sum_{j=1}^{V} \log \left( V \cdot p(y_t=j|x, y_{<t}) \right).
\label{eq:conf}
\end{align}}}
Here, $p(y_t=j|x, y_{<t})$ denotes the probability of choosing the $j^{th}$ \text{token} in the vocabulary as the output token at the $t$ position of the output sequence. Intuitively, higher model confidence in Eq. (\ref{eq:conf}) translates to larger divergence between the output distributions and a uniform distribution $U$, hence corresponding to a relatively peakier and informative $p(y_t|x, y_{<t})$ having small uncertainty. On the contrary, a smaller $C_t^*$ signals a flatter distribution with increased uncertainty. Confidence defined as in Eq. (\ref{eq:conf}) is related to but distinct from the forward KL divergence and entropy: $\text{Entropy} \propto - {\text{KL}}(p(y_t|x, y_{<t})\parallel U)$. 
By following \cite{fu2025deepconf}, we adopt the approximation of $C_t^*$ by only computing truncated confidence at position $t$ using the top-K probabilities and define the confidence as $C_t$:
{\small{\begin{equation}
C_t = -\frac{1}{K}\sum_{j \in \mathcal{K}_t} \log p(y_t=j|x, y_{<t}),
\label{eq:defineconf}
\end{equation}}} 
where $\mathcal{K}_t$ is the set of size $K<V$ containing the tokens having top-K probabilities for each position $t$. We fix $K=20$ as in \cite{fu2025deepconf}.

\begin{figure*}[t]
    \centering    \includegraphics[width=1.0\linewidth]{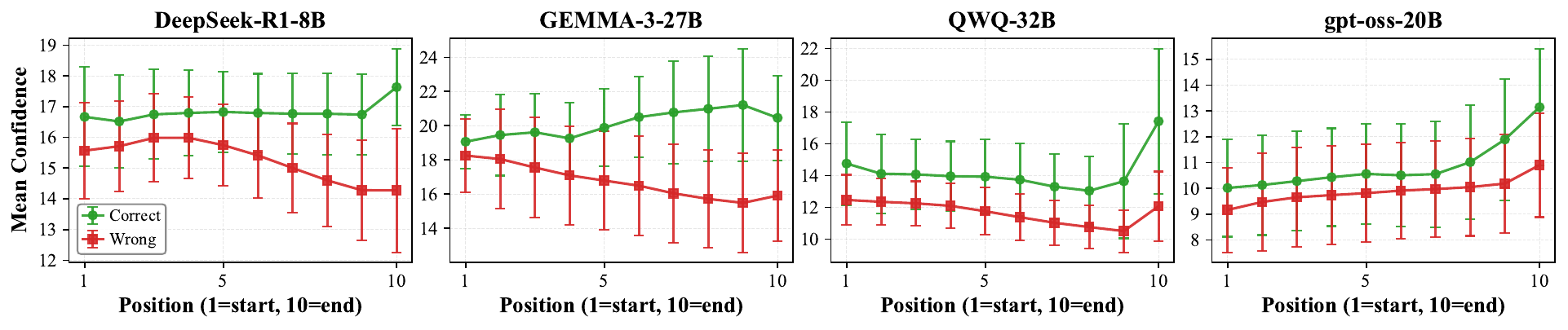}
    \vspace{-0.6cm}
    \caption{Confidence dynamic curves  for the reasoning trajectories (partitioned in 10 bins) for different models on AIME 2025 dataset. Significance tests (Table~\ref{tab:significance_test} in Appendix) show that correct traces have greater confidence gains than wrong traces.}
    \label{fig:confidence}
    \vspace{-0.5cm}
\end{figure*} 

\subsection{Confidence Trajectories in Repeated Sampling.}\label{sec:bin}
Following the set-up in \cite{wang2022self}, we sample $L$ answers/traces from LLM for each question. Our goal is to select the answer with best accuracy. For each reasoning trace $\ell \in [1, L]$, we compute $C_{\ell,t}$ as the confidence of $t^{\text{th}}$ token in trace $\ell$ according to equation Eq. (\ref{eq:defineconf}). We extract a sequence of per-token confidence $\{C_{\ell,1}, C_{\ell, 2}, \ldots, C_{\ell, T}\}$ to represent the confidence trajectory of trace $\ell$. We compute two scores for each trace:
\textbf{(1) trace level confidence}, \textbf{(2) trace level Confidence Dynamic Gain (CDG)}.  We first follow \cite{fu2025deepconf} and use the average confidence across all tokens in the trace to represent the trace level confidence of the $\ell^{th}$ trace: 
{\small{\begin{equation}
\bar{C_{\ell}} = \frac{1}{T}\sum_{t=1}^{T} C_{\ell,t}.
\end{equation}}}
In order to analyze how model confidence evolves across reasoning traces of varying lengths, we propose a position-normalized binning approach. For each trace with $T$ tokens and per-token confidence sequence $\{C_{\ell,1}, C_{\ell, 2}, \ldots, C_{\ell, T}\}$, we partition the sequence into $N$ equal-sized bins (default $N=10$). The bin size is $b = T/N$, and the $n$-th bin ($n \in 1, 2, \ldots, N$) contains tokens at positions indexed by $t \in [1, T]$:
\begin{equation}
\mathcal{B}_{\ell,n} = \left\{ C_{\ell,t} : \left\lfloor (n-1) \cdot b \right\rfloor < t \leq \left\lfloor n \cdot b \right\rfloor \right\} 
\end{equation}
where $\left\lfloor X \right\rfloor$ applies the floor function to variable $X$ and rounds down to the nearest integer of $X$.
The mean confidence of the $\ell^{th}$ trace for bin $n$ is computed as:
{\small{\begin{equation}\bar{C}_{\ell}^{(n)} = \frac{1}{|\mathcal{B}_{\ell,n}|} \sum_{C_{\ell,t} \in \mathcal{B}_{\ell,n}} C_{\ell,t}.
\end{equation}}}
Here $|\mathcal{B}_{\ell,n}|$ denotes the number of tokens assigned to bin $n$.
This normalization maps traces of arbitrary length onto a fixed $N$-dimensional representation $(\bar{C}_{\ell}^{(1)}, \bar{C}_{\ell}^{(2)}, \ldots, \bar{C}_{\ell}^{(n)}, \ldots \bar{C}_{\ell}^{(N)})$, where, for instance, bin 1 includes the beginning tokens of reasoning and bin $N$ represents the end. This bin-based confidence statistic has facilitated direct comparison of confidence trajectories along the reasoning chain across traces having different token counts.

In order to observe the confidence trajectories, we evaluate four mainstream LLM architectures: {DeepSeek/DeepSeek-R1-8B} \cite{deepseekr1}; {OpenAI/gpt-oss-20b} \cite{gptoss}; {Google/gemma-3-27b-it} \cite{gemma_2025} and {Qwen/QwQ-32B} \cite{qwq32b} on the AIME 2025 benchmark \cite{matharena_2025}. For each model, we sample $L=512$ reasoning traces per question. Each trace is partitioned into $N=10$ position-normalized bins, yielding a confidence trajectory $(\bar{C}_{\ell}^{(1)}, \ldots, \bar{C}_{\ell}^{(10)})$ for each $\ell^{\text{th}}$ trace. We aggregate these trajectories by first averaging each bin across $L=512$ traces within each question, then averaging across all 30 questions. The resulting confidence curves are shown in Figure~\ref{fig:confidence}.

Surprisingly, we observe an intriguing pattern in Figure \ref{fig:confidence}: correct and wrong traces exhibit strong distinguishing confidence dynamics patterns: as the reasoning chain proceeds to further token positions till the end, correct traces display obviously greater confidence gains as the reasoning progresses along the reasoning trajectory. In contrast, incorrect traces show attenuated gains or even declining confidence toward the end of the reasoning chain. The error bars indicate standard error across questions, showing that the tail-head difference between correct and incorrect traces is consistent and statistically significant (see Table~\ref{tab:significance_test} in Appendix for detailed significance tests).

\subsection{Confidence Dynamic Gain based Voting}\label{sec:CDGvotingandbeta}
Does such a confidence trajectory evolving pattern provide any useful additional discriminative information to the exist-
ing confidence measures for sampling? To answer this question, we are motivated to propose a novel voting based sampling method called ``Confidence Dynamic Gain based Voting'', abbreviated as CDG. The CDG method is designed to explicitly investigate whether and how we can leverage the systematic difference in how confidence evolves across the reasoning trajectory to improve answer selection.

Given the $\ell^{th}$ sampled trace with confidence bins $(\bar{C}_{\ell}^{(1)}, \bar{C}_{\ell}^{(2)}, \ldots, \bar{C}_{\ell}^{(n)}, \ldots \bar{C}_{\ell}^{(N)})$, we compute the Confidence Dynamic Gain (CDG) as the difference between tail bin and head bin confidence:
{\small{\begin{equation}
\Delta C_{\ell} = \frac{1} {|T_{\text{tail}, P}|}\sum_{n \in T_{\text{tail}, P}} \bar{C}_{\ell}^{(n)}  - \frac{1}{|T_{\text{head}, P}|}\sum_{n \in T_{\text{head}, P}} \bar{C}_{\ell}^{(n)}, 
\label{eq:CDGdefine}
\end{equation}}}
where $T_{\text{head}, P}$ and $T_{\text{tail}, P}$ denote the set having first and last $P\%$ of bins (also tokens) in the $\ell^{\text{th}}$ trace . For example, for $N=10$ total bins, $T_{\text{head},{10}}$ corresponds to first bin $\mathcal{B}_{\ell,1}$ of the token positions (first $10\%$ token positions), and $T_{\text{tail},{10}}$ the last $10\%$ token positions.

Intuitively, a \textbf{positive} Confidence Dynamic Gain (CDG), i.e., $\Delta C_{\ell} > 0$ as defined in Eq.(\ref{eq:CDGdefine}) indicates the model is gaining \textit{more} confidence as the reasoning chain progresses to final answer (confidence increases), whereas a \textbf{negative}  $\Delta C_{\ell} < 0$ indicates the model is losing confidence as the reasoning proceeds to the final conclusion of the answers (confidence decreases). Figure \ref{fig:flowchart} illustrates how $\Delta C_{\ell}$ is computed and implemented. 

Given this observation, it is natural for us to define a trace level quality score for each  $\ell^{\text{th}}$ trace for voting:
\begin{equation}
s_{\ell} = \bar{C}_{\ell} + \beta \cdot   \Delta C_{\ell},
\end{equation}
where $\beta$ controls influence of the CDG for voting. It is transparent that in addition to the trace level average confidence $\bar{C}_{\ell}$, each trace additionally receives an extra credit during the voting: if the term $\Delta C_{\ell}$ is positive, the score $s_{\ell}$ is increased, otherwise, the score is penalized for having negative  $\Delta C_{\ell}$. \textbf{Selection of $\beta$ .} We found the optimal $\beta$ is model specific and scales with $r_b = \mu_C / \Delta_{\mu}$, where $\mu_C = \mathbb{E}_{\ell}[\bar{C}_{\ell}]$ is the expected per-trace mean confidence and $\Delta_{\mu} = |\mu_{\Delta C}^+ - \mu_{\Delta C}^-|$ is the expected CDG separation between correct and wrong traces using a few validation questions, where $\mu_{\Delta C}^+ = \mathbb{E}_{\ell \in \text{correct}}[\Delta C_{\ell}]$ and $\mu_{\Delta C}^- = \mathbb{E}_{\ell \in \text{wrong}}[\Delta C_{\ell}]$. Empirically, $\beta \in [0.5r_b, 1.5r_b]$ yields robust performance, i.e., $\beta\cdot\Delta C_{\ell}$ should be comparable with $\bar C_{\ell}$ in magnitude.

{\bf Answer voting}. Inspired by confidence weighting in \cite{fu2025deepconf}, we define final score for each answer $a$ as:
\begin{equation}
R(a) = |\mathcal{T}_a|^\alpha  \cdot \mu_a(s_\ell).
\label{eq:score}
\end{equation}
Here $\mathcal{T}_a$ is the set of all traces producing $a$, and $\mu_a (s_\ell) = \frac{1}{|\mathcal{T}_a|} \sum_{\ell \in a} s_\ell  $ calculates the mean value of $s_\ell$ across all $\ell$ traces giving the same answer $a$. The count term $|\mathcal{T}_a|^\alpha$ with range $\alpha \in [0, 1]$ controls dampening effect on voting, as we noticed answer count is often dominating (by the effect of majority voting) and weakening the effect of confidence dynamic signal. $\alpha < 1$ weighs less on counting and weighs more on the confidence patterns. It is also worth noting that Eq. (\ref{eq:score}) generalizes the deepconf \cite{fu2025deepconf} method and majority vote method, where deepconf is a special case of CDG with $\alpha = 1, \beta = 0$; and majority vote has $\alpha = 1, \mu_a (s_\ell) = 1$ in Eq. (\ref{eq:score}). Finally, the predicted answer is chosen as the one having the maximum $R(a)$:
$\hat{a} = \arg\max_a R(a)$, 
where $\hat{a}$ is the predicted answer.

\begin{figure}[t]
\vspace{-0.5em}
\begin{algorithm}[H]
\caption{Confidence Dynamic Gain based Voting}
{\small
\begin{algorithmic}[1]
\REQUIRE Trace answers $\{a_{\ell}\}_{\ell=1}^L$ and per-token confidence sequences $\{C_{\ell,t}\}_{\ell=1}^L$, hyperparameters $\alpha, \beta, P$
\FOR{each trace $\ell$}
    \STATE $\bar{C}_{\ell} = \frac{1}{T}\sum_{t=1}^{T} C_{\ell,t}$
    \STATE Compute $\Delta C_{\ell}$ according to Eq. (\ref{eq:CDGdefine})
    \STATE $s_{\ell} \leftarrow  {\bar{C}}_{\ell} + \beta \cdot  \Delta C_{\ell}$
\ENDFOR
\FOR{each unique answer $a$}
\STATE $\mathcal{T}_a \leftarrow \{\ell : a_{\ell} = a\}$
    \STATE $R(a) \leftarrow |\mathcal{T}_a|^\alpha  \cdot \mu_a (s_\ell)$, $\forall{\ell \in \mathcal{T}_a}$   \ENDFOR \\
\RETURN $\arg\max_a R(a)$
\end{algorithmic}}
\end{algorithm}
\vspace{-2em}
\end{figure}

\section{Theoretical Analysis}
\label{sec:theory}

We aim to explain why confidence trajectories differ between correct and incorrect traces (All proof in Appendix). Our analysis presents simplified assumptions to abstract away architecture details but captures key training dynamics. We expect these insights to extend to other advantage-weighted method beyond GRPO, as the core mechanism — higher tail concentration for correct answers—inherently arises from any training with verifiable rewards.

\begin{definition}[GRPO Objective]
For input $x$, GRPO samples $G$ answers $\{a_1, \dots, a_G\}$ with binary rewards $r_i \in \{0, 1\}$. The group-normalized advantage is $A_i = (r_i - \bar{r})/\sigma_r$, where $\bar{r}$ and $\sigma_r$ are the mean and standard deviation of rewards within the group.
\end{definition}

\begin{theorem}[GRPO Advantage Computation]\label{thm:grpo}
In a GRPO batch with $k$ correct and $(G-k)$ incorrect answers ($0 < k < G$), the advantages are:
{\small{
\begin{equation}
A_{\text{correct}} = \sqrt{\frac{G-k}{k}} > 0, \quad A_{\text{incorrect}} = -\sqrt{\frac{k}{G-k}} < 0.
\end{equation}}}
\end{theorem}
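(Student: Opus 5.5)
The plan is a direct computation from the Definition of the GRPO objective. Write $p = k/G$ for the fraction of correct answers in the group. Since the rewards are binary, the group mean is $\bar r = \frac{1}{G}\sum_i r_i = k/G = p$.

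The one convention to fix is which standard deviation is used. I will take the population (biased) standard deviation $\sigma_r = \sqrt{\frac{1}{G}\sum_i (r_i-\bar r)^2}$, because this is the only normalization that gives exactly the stated closed forms; the unbiased $1/(G-1)$ version would introduce a factor $\sqrt{(G-1)/G}$, and I would note this in a remark. Because $r_i^2 = r_i$, the variance simplifies:
\begin{equation*}
\frac{1}{G}\sum_i r_i^2 - \bar r^2 = p - p^2 = p(1-p) = \frac{k(G-k)}{G^2}.
\end{equation*}
Hence $\sigma_r = \sqrt{k(G-k)}/G$. This is strictly positive precisely because $0<k<G$, so the advantage is well defined; in the degenerate cases $k\in\{0,G\}$ one has $\sigma_r=0$, which is why they are excluded.

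Next I substitute into $A_i = (r_i-\bar r)/\sigma_r$. For a correct answer, $r_i=1$, so
\begin{equation*}
A_{\text{correct}} = \frac{(G-k)/G}{\sqrt{k(G-k)}/G} = \sqrt{\frac{G-k}{k}}.
\end{equation*}
For an incorrect answer, $r_i=0$, so
\begin{equation*}
A_{\text{incorrect}} = \frac{-k/G}{\sqrt{k(G-k)}/G} = -\sqrt{\frac{k}{G-k}}.
\end{equation*}
The signs follow immediately from $0<k<G$: the first expression is strictly positive and the second strictly negative. As a sanity check, the advantages sum to zero, since $k\sqrt{(G-k)/k} - (G-k)\sqrt{k/(G-k)} = \sqrt{k(G-k)} - \sqrt{k(G-k)} = 0$.

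There is no real obstacle in this argument. The only point requiring care is the standard-deviation convention, which has to be made explicit so that the closed form holds exactly.
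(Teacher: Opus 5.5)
Your proposal is correct and matches the paper's proof step for step: compute $\bar r = k/G$, use $r_i^2 = r_i$ to get $\sigma_r^2 = k(G-k)/G^2$, then substitute into $A_i = (r_i-\bar r)/\sigma_r$. Two of your remarks are worthwhile clarifications that the paper leaves implicit: the closed form requires the population (not the $1/(G-1)$) standard deviation, and $0<k<G$ is needed so that $\sigma_r>0$.
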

GRPO assigns uniform positive advantage to correct traces and uniform negative advantage to incorrect traces.


\begin{table*}[t]
\caption{Accuracy (\%) of different voting methods across models and benchmarks. Best average results are in \textbf{bold}. Pass@1 is the single-trace baseline. For CDG (ours), we use $\alpha=0.5$ and $P\%=10\%$ across all experiments. $\beta=10$ for DeepSeek-R1-8B and gpt-oss-20B; $\beta = 3$ for Gemma-3-27B and QWQ-32B. ``Majority'' is for Majority Vote \cite{wang2022self}. D-CDG is for "Degenerated-CDG", i.e., an ablation of CDG with either $\beta=0, \alpha=0.5$ (no confidence gain dynamics, but with count dampening) or $\alpha=1, \beta \neq 0$ (no count dampening, but with confidence dynamic gain). DC-Mean/DC-Tail are DeepConf-Mean/Tail with top $10\%$ filtering \cite{fu2025deepconf}.} 
\label{tab:main_results}
\centering
\resizebox{\textwidth}{!}{%
\begin{tabular}{llccccccc}
\toprule
\toprule
\textbf{Model} & \textbf{Dataset} & \textbf{Pass@1} & \textbf{Majority} & \textbf{DC-Mean} & \textbf{DC-Tail} & \textbf{D-CDG ($\alpha=1$)}& \textbf{D-CDG ($\beta=0$)}   & \textbf{CDG (ours)} \\
\midrule
\toprule
\multirow{5}{*}{DeepSeek-R1-8B}
 & AIME 2024  & 86.5 & 90.0 & 90.0 & 93.3 & 93.3 & 90.0 & 93.3 \\
 & AIME 2025  & 77.5 & 83.3 & 83.3 & 83.3 & 90.0 & 83.3 &  93.3 \\
 & BRUMO 2025 & 79.7 & 93.3 & 93.3 & 93.3 & 93.3 & 93.3 & 93.3 \\
 & HMMT 2025  & 59.5 & 70.0 & 70.0 & 83.3 & 76.7 & 70.0 & 83.3 \\
\cmidrule{2-9}
 & \textit{Average} & 75.8 & 84.2 & 84.2 & 88.3 & 88.3  & 84.2 &\bf 90.8 \\
\midrule
\multirow{5}{*}{Gemma-3-27B}
 & AIME 2024  & 31.5 & 50.0 & 50.0 & 53.3       & 56.7 & 50.0   & 56.7 \\
 & AIME 2025  & 24.7 & 30.0 & 30.0 &{46.7}  & 33.3 & 30.0   & 40.0\\
 & BRUMO 2025 & 35.9 & 40.0 & 40.0 & 46.7       & 46.7 & 43.3   & 46.7 \\
 & HMMT 2025  & 10.8 & 20.0 & 20.0 & 13.3       & 23.3 & 20.0   & 23.3 \\
\cmidrule{2-9}
 & \textit{Average} & 25.7 & 35.0 & 35.0 & 40.0 & 40.0 & 35.8  &\bf  41.7 \\
\midrule
\multirow{5}{*}{gpt-oss-20B}
 & AIME 2024  & 77.9 & 93.3 & 93.3 & 93.3 & 93.3 & 93.3 & 93.3 \\
 & AIME 2025  & 71.1 & 90.0 & 90.0 & 90.0 & 90.0 & 90.0 & 93.3 \\
 & BRUMO 2025 & 64.8 & 80.0 & 83.3 & 83.3 & 83.3 & 83.3 & 83.3  \\
 & HMMT 2025  & 52.2 & 66.7 & 70.0 & 73.3 & 70.0 & 70.0 & 73.3 \\
\cmidrule{2-9}
 & \textit{Average} & 66.5 & 82.5 & 84.2 & 85.0  & 84.2  & 84.2  &\bf 85.8 \\
\midrule
\multirow{5}{*}{QwQ-32B}
 & AIME 2024  & 81.7 & 86.7 & 90.0 & 86.7 & 90.0 & 90.0 & 90.0 \\
 & AIME 2025  & 71.6 & 76.7 & 76.7 & 76.7 & 76.7 & 76.7 & 76.7 \\
 & BRUMO 2025 & 76.8 & 80.0 & 80.0 & 86.7 & 86.7 & 80.0 & 90.0\\
 & HMMT 2025  & 48.7 & 56.7 & 56.7 & 63.3 & 63.3 & 56.7 & 63.3 \\
\cmidrule{2-9}
 & \textit{Average} & 69.7 & 75.0 & 75.9 & 78.3 & 79.2 & 75.8 &\bf  80.0 \\
\midrule
 & \textit{Overall Average} & 59.4 & 69.2 & 69.8  &  72.9 & 72.9  & 70.0 &\bf  74.6 \\
\bottomrule
\end{tabular}%
}
\end{table*}

\begin{assumption}[Relative Token Concentration]\label{asmp:concentration}
Let $n_t^+(v)$ and $n_t^-(v)$ denote token frequency at position $t$ among correct and incorrect traces generated during training. We assume during training:
\textbf{(A1) Answer Convergence:} Correct traces converge to ground truth: $n_T^+(v^*) = k$.
\textbf{(A2) Reasoning Diversity:} With $M \ge 2$ valid approaches, $\max_{v,t<T} n_t^+(v) \leq k/M < k$.
\textbf{(A3) Relative Concentration Gap:} The tail-to-head ratio for incorrect traces is bounded: $\mathbb{E}[n_T^-] / \mathbb{E}[n_{<T}^-] \leq \gamma M$ for $\gamma < 1$.
\end{assumption}
Here: (A1) follows from verifiable rewards; (A2) reflects diverse reasoning paths; (A3) Even if the model has a ``favorite'' wrong answer (a distractor), the probability mass of the incorrect class is inherently fragmented across multiple failure modes, whereas the correct class is usually focused on a singular target during training. 

\begin{assumption}[Simplified Training Model]\label{asmp:model}
Let $\phi_t(v)$ denote the logit of token $v$ at position $t$ and $\Delta \phi_t(v)$ its expected training update. \textbf{(M1)} Token logit updates are proportional to advantage-weighted frequency: $\mathbb{E}_{\text{train}}[\Delta \phi_t(v)] \propto A_{\text{correct}} \cdot n_t^+(v) + A_{\text{incorrect}} \cdot n_t^-(v)$; \textbf{(M2)} Inference logits approximate base logits plus accumulated training updates; \textbf{(M3)} Confidence increases monotonically with selected token logits. In Theorem~\ref{thm:logit_updates}, $\Delta \phi_{\text{head}}$ and $\Delta \phi_{\text{tail}}$ denote $\Delta \phi_t(v)$ aggregated over head ($t \ll T$) and tail ($t \approx T$) token positions, respectively. These simplifications abstract away architectural details, with first-order approximations (Assumption Validation in Appendix).
\end{assumption}

\begin{theorem}[Asymmetric Logit Updates]\label{thm:logit_updates}
Under Theorem \ref{thm:grpo} and Assumptions~\ref{asmp:concentration}-\ref{asmp:model}, the tail-to-head reinforcement ratios satisfy:
{\small{
\begin{align}
\text{Correct traces:} \quad &\frac{\mathbb{E}[\Delta \phi_{\text{tail}}]}{\mathbb{E}[\Delta \phi_{\text{head}}]} = M \ge 2, \\
\text{Incorrect traces:} \quad &\frac{\mathbb{E}[\Delta \phi_{\text{tail}}]}{\mathbb{E}[\Delta \phi_{\text{head}}]} \leq \gamma M \quad \text{where } \gamma < 1.
\end{align}}}
\end{theorem}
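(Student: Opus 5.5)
The plan is to compute both reinforcement ratios directly from the linear update rule (M1). The correct and incorrect groups are handled separately: each trace class is driven by its own advantage-weighted frequency term.

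\textbf{Correct traces.} We read off from Theorem~\ref{thm:grpo} that every correct trace carries the same advantage $A_{\text{correct}}=\sqrt{(G-k)/k}>0$. By (M1), the expected update of a selected token on a correct trace is proportional to $A_{\text{correct}}\, n_t^+(v)$. The positive constant of proportionality and $A_{\text{correct}}$ cancel in the ratio, so
\[
\frac{\mathbb{E}[\Delta\phi_{\text{tail}}]}{\mathbb{E}[\Delta\phi_{\text{head}}]} = \frac{n_T^+(v^*)}{n_{\text{head}}^+},
\]
where $n_{\text{head}}^+$ is the aggregated head frequency. The numerator is $k$ by (A1). For the denominator we use (A2). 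Taking the head frequency at its diversity level $k/M$ (all $M$ valid approaches equally represented, which is the idealized case the statement encodes as an equality) gives the ratio $k/(k/M)=M\ge 2$. If only the inequality in (A2) is available, the same computation gives the ratio $\ge M$. I would remark that the equality requires interpreting the head aggregate as the per-approach share $k/M$. I would make this explicit as the symmetric-approach idealization rather than hide it.

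\textbf{Incorrect traces.} Here every trace carries $A_{\text{incorrect}}=-\sqrt{k/(G-k)}$, again uniform across positions. By (M1) the head and tail updates are both $A_{\text{incorrect}}$ times the expected frequencies $\mathbb{E}[n^-_{<T}]$ and $\mathbb{E}[n^-_T]$. The common negative factor cancels in the ratio, leaving $\mathbb{E}[n_T^-]/\mathbb{E}[n_{<T}^-]$, which (A3) bounds by $\gamma M$ with $\gamma<1$. One point needs checking: the sign cancellation must not flip the inequality. Both numerator and denominator share the sign of $A_{\text{incorrect}}$, so their ratio is a ratio of nonnegative frequencies, and the bound transfers directly. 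We then note $\gamma M<M$, which gives the strict separation between the two classes. Assumptions (M2)--(M3) then translate the logit ratios into the confidence-gain asymmetry.

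\textbf{Main obstacle.} The delicate step is justifying "aggregation" over head and tail positions. (M1) is stated per position and token, while the theorem compares aggregated quantities. I would define $\Delta\phi_{\text{head}}$ as the update of the selected token at a representative head position, equivalently an average over $t\ll T$ of the max-frequency token. Linearity of (M1) then makes the aggregate a frequency average, and (A2) applies pointwise and hence to the average. For the correct-trace equality, I would state the needed saturation of (A2) explicitly.
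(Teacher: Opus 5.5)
Your proposal is correct and follows the paper's proof essentially step for step. You cancel the class-uniform advantage from Theorem~\ref{thm:grpo} in each ratio, use (A1) and (A2) to get $k/(k/M)=M$ for correct traces, and apply (A3) directly for incorrect traces. Your remark that (A2) alone only gives a ratio $\ge M$ unless the head frequency saturates at $k/M$ is a fair sharpening of what the paper assumes without saying so.

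The one justification the paper states that you skip is $n_T^-(v^*)=0$, i.e.\ incorrect traces never end in the ground-truth answer. That fact is what licenses dropping the $A_{\text{incorrect}}\, n_T^-(v^*)$ cross-term from (M1) in the correct-trace tail update.
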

The key asymmetry: correct traces have tail-to-head ratio $M$, while incorrect traces have lower ratio $\gamma M < M$ (bounded by factor $\gamma < 1$).

\begin{theorem}[Confidence Gain Separation]\label{thm:main}
Under GRPO training with verifiable rewards and Assumptions~\ref{asmp:concentration}-\ref{asmp:model}, the expected confidence dynamic gain satisfies:
{\small{
\begin{align}
&\mathbb{E}[\Delta C_\ell \mid \text{Correct}] - \mathbb{E}[\Delta C_\ell \mid \text{Incorrect}] \\
&\geq c \cdot \eta_{\text{eff}} \cdot \sqrt{k(G-k)} \cdot \left(\gamma M - \frac{1}{M}\right) > 0,
\end{align}}}
for constants $c > 0$ and effective learning rate $\eta_{\text{eff}}$. The separation is positive provided $\gamma > 1/M^2$.
\end{theorem}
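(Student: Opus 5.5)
The proof chains Theorem~\ref{thm:grpo}, Assumption~\ref{asmp:model} and Theorem~\ref{thm:logit_updates} into a statement about $\Delta C_\ell$. The approach is to linearize confidence around the base model. Write the expected confidence at position $t$ as $\mathbb{E}[C_t] \approx C_t^{\text{base}} + \kappa\, \eta_{\text{eff}}\, \mathbb{E}[\Delta\phi_t]$ for some $\kappa>0$. This uses (M2), which says inference logits are base logits plus accumulated updates scaled by the effective learning rate, and (M3), which says confidence is monotone in the selected-token logit, so a first-order Taylor expansion gives a positive slope $\kappa$.

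Averaging over the head and tail bins then gives
\[
\mathbb{E}[\Delta C_\ell\mid \cdot] \approx \Delta C^{\text{base}} + \kappa\,\eta_{\text{eff}}\bigl(\mathbb{E}[\Delta\phi_{\text{tail}}\mid\cdot]-\mathbb{E}[\Delta\phi_{\text{head}}\mid\cdot]\bigr).
\]
I will assume the base-model term $\Delta C^{\text{base}}$ does not depend on correctness to first order. It then cancels in the difference, and the whole problem reduces to comparing the tail-minus-head update gaps of the two classes.

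Next I make those gaps explicit using (M1) and Theorem~\ref{thm:grpo}. For correct traces, (A1) gives a tail update proportional to $A_{\text{correct}}\cdot k=\sqrt{k(G-k)}$. By (A2) the head update is at most $A_{\text{correct}}\cdot k/M$, consistent with the ratio $M$ from Theorem~\ref{thm:logit_updates}. For incorrect traces the updates carry $A_{\text{incorrect}}\, n^-_t$, and the magnitude $\sqrt{k/(G-k)}\cdot n^-$ is again of order $\sqrt{k(G-k)}$ once $n^-$ is of order $G-k$. The tail-to-head ratio is then controlled by (A3), i.e.\ by $\gamma M$.

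Factoring out $\sqrt{k(G-k)}$ and normalizing by the head-level scale, I want the two classes to contribute terms proportional to $\gamma M$ and to $1/M$. One comes from the tail-to-head ratio and the other from the head suppression relative to the tail. Collecting everything into $c=\kappa\cdot(\text{normalization})$ should give the stated lower bound $c\,\eta_{\text{eff}}\sqrt{k(G-k)}(\gamma M-1/M)$. Its positivity is exactly the condition $\gamma M>1/M$, i.e.\ $\gamma>1/M^2$.

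The main obstacle is the bookkeeping of signs and normalizations in this last step. Theorem~\ref{thm:logit_updates} gives ratios, not differences. Moreover, for incorrect traces the advantage is negative, so updates are suppressions, and a bounded ratio of negative quantities does not by itself order the differences. I would first fix a common head scale $h=\mathbb{E}[\Delta\phi_{\text{head}}]$ in absolute value for each class. I would then write $\text{tail}-\text{head}$ as $(\rho-1)h$, with $\rho$ the ratio from Theorem~\ref{thm:logit_updates}, and track the sign of $A_{\text{incorrect}}$.

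This is where I expect the specific combination $\gamma M-1/M$ to appear, and also where additional mild assumptions are needed. Those assumptions are comparable magnitudes of $n^-$ and $n^+$, and $\Delta C^{\text{base}}$ being class-independent. I will state them explicitly and absorb them into $c$.
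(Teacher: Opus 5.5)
Your route is the same as the paper's. You linearize confidence through (M2)/(M3); the paper's slope is $1/K$, via Lemma~\ref{lem:confidence}. You let the base term drop out and get $(1-1/M)\sqrt{k(G-k)}$ for correct traces from (A1)/(A2). You express the incorrect-class gain through the ratio $\rho=\mathbb{E}[n_T^-]/\mathbb{E}[n_{<T}^-]$, with $|A_{\text{incorrect}}|\,\mathbb{E}[n_{<T}^-]\sim\sqrt{k(G-k)}$, and then add $(1-1/M)+(\gamma M-1)=\gamma M-1/M$.

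The gap is the step you leave as ``should give''. That is exactly where the argument breaks, and the sign bookkeeping you propose would expose the problem rather than fix it. Put $h=c_1A_{\text{incorrect}}\mathbb{E}[n_{<T}^-]<0$. The incorrect-class gain is then $\kappa\eta_{\text{eff}}(\rho h-h)=\kappa\eta_{\text{eff}}|h|(1-\rho)$. (A3), i.e.\ $\rho\le\gamma M$, yields $|h|(1-\rho)\ge|h|(1-\gamma M)$. That is a \emph{lower} bound on $\mathbb{E}[\Delta C_\ell\mid\text{Incorrect}]$, whereas lower-bounding the separation needs an \emph{upper} bound.

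Take the correct-class ratio to be exactly $M$, as in Theorem~\ref{thm:logit_updates}, and $|h|=c_1\sqrt{k(G-k)}$. The separation is then
\[
\kappa\,\eta_{\text{eff}}\,c_1\sqrt{k(G-k)}\,\left(\rho-\frac{1}{M}\right),
\]
which is increasing in $\rho$. So (A3) turns $\gamma M-1/M$ into an \emph{upper} bound on the separation. Since (A3) allows $\rho\to 0$, the separation can even be negative, roughly $-\kappa\eta_{\text{eff}}c_1\sqrt{k(G-k)}/M$. In this toy model, a fragmented wrong-answer tail is \emph{less} suppressed than the head, which raises the incorrect traces' gain rather than lowering it.

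The paper's own proof reaches the stated constant only through a sign slip in its Step~2. It multiplies $\rho-1\le\gamma M-1$ by the negative factor $A_{\text{incorrect}}\mathbb{E}[n_{<T}^-]$ without reversing the inequality. So the extra hypothesis you need is not a mild one like comparable magnitudes of $n^\pm$ or class-independent base confidence. It is a lower bound $\mathbb{E}[n_T^-]/\mathbb{E}[n_{<T}^-]\ge\gamma M$, i.e.\ (A3) with the inequality reversed.

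Under that reversed hypothesis, with $|h|$ identified with $c_1\sqrt{k(G-k)}$ as the paper does, your plan goes through. It gives $\mathbb{E}[\Delta C_\ell\mid\text{Incorrect}]\le\kappa\eta_{\text{eff}}c_1\sqrt{k(G-k)}(1-\gamma M)$, and hence the stated bound, positive when $\gamma>1/M^2$. Under (A3) as stated, no choice of signs and normalizations will produce the theorem.
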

\textbf{Interpretation.} The core mechanism here leading to Theorem \ref{thm:main} is the asymmetry in signal concentration. For \textit{correct traces}, diverse reasoning paths tend to collapse into the same answer, creating a high tail-to-head concentration ratio $M$. This amplifies the positive reinforcement at the tail, driving a sharp confidence increase. For \textit{incorrect traces}, under the assumption that the concentration ratio is lower ($\gamma M < M$) than correct traces because during training answer errors are inherently more diverse than the unique ground truth, we are able to show that the positive ``confidence spike'' for correct answers significantly outpaces any concentration in incorrect traces, creating the systematic separation gap exploited by CDG.

\section{Empirical Results}\label{sec:experiments}

We conduct comprehensive experiments to evaluate the effectiveness of CDG across multiple dimensions: (1) comparison against baseline methods on challenging mathematical reasoning benchmarks, (2) ablation studies, and (3) analysis of score distributions of different methods. We benchmark CDG against state-of-the-art voting baselines, including majority voting~\citep{wang2022self}, DeepConf-Mean, and DeepConf-Tail~\citep{fu2025deepconf}, across four challenging mathematics datasets. More complete results and analysis are included in Appendix.
\begin{figure*}
    \centering    \includegraphics[width=1\linewidth]{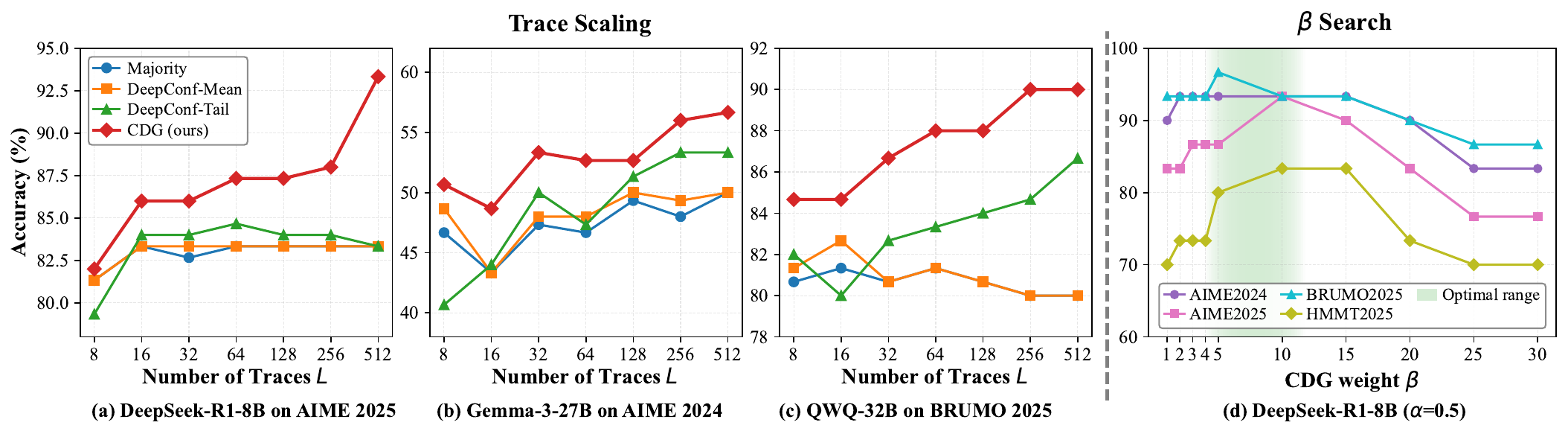}
    \vspace{-0.5cm}
    \caption{Ablation study of trace number (budget) $L$ and CDG weight $\beta$. (a) Accuracy vs $L$ for DeepSeek-R1-8B on AIME 2025; (b) Accuracy vs $L$ for Gemma-3-27B on AIME 2024. (c) Accuracy vs $L$ for QwQ-32B on BRUMO 2025. (d)Accuracy vs $\beta$ for CDG using DeepSeek-R1-8B across 4 datasets. The green band highlights the estimated optimal range of $\beta \in [0.5r_b, 1.5r_b]$ as discussed in Section \ref{sec:CDGvotingandbeta}. See Figures~\ref{fig:scaling_4x4} and \ref{fig:beta_ablation_4models} in the Appendix for complete results.}   \label{fig:ablationtraceandbeta}
     \vspace{-0.5cm}
\end{figure*}

\subsection{Experimental Setup}
{\bf Models.} We evaluate four state-of-the-art open-source reasoning LLMs, representing diverse design choices and training methodologies: (1) \textbf{DeepSeek/DeepSeek-R1-8B} \cite{deepseekr1}; (2) \textbf{OpenAI/gpt-oss-20B} \cite{gptoss}; (3) \textbf{Google/gemma-3-27b-it} \cite{gemma_2025}, an open-weight model; and (4) \textbf{Qwen/QwQ-32B} \citep{qwq32b}. This diverse selection enables us to assess the generalizability of confidence dynamics patterns across different model families. For \textbf{gpt-oss-20B}, we disabled the ``high reason'' setting; this allows us to more clearly observe the sampling dynamics given the model's imperfect reasoning capacity. We excluded Qwen-3 model \cite{yang2025qwen3} as its post-RL training involves entropy control, mechanisms that directly disrupt the confidence trajectory patterns and thus violate the assumptions underlying the theorems studied in this work.

{\bf Benchmarks.} Following the evaluation protocol of state-of-the-art answer selection methods \cite{fu2025deepconf}, we compare different methods on following benchmarks: (1) \textbf{AIME 2024} \cite{matharena_2025} (American Invitational Mathematics Examination); (2) \textbf{AIME 2025} \cite{matharena_2025}; (3) \textbf{HMMT 2025} \cite{hmmt2025}(Harvard-MIT Mathematics Tournament); and (4) \textbf{BRUMO 2025} \cite{brumo2025} (Bulgarian Mathematical Olympiad). These benchmarks have multi-step reasoning trajectories and verifiable ground truth, which is ideal for evaluating CDG and justification of the theories. We also include partial results on GPQA-Diamond \cite{rein2024gpqa} with DeepSeek-R1-8B in the Appendix (Table~\ref{tab:gpqa_diamond}), to show the generalizability of CDG beyond mathematical reasoning.

{\bf Baselines.} We compare CDG against three state-of-art voting methods: \textbf{(1) Majority Voting (Self-Consistency)} \cite{wang2022self}: Selects the answer with the highest count among sampled traces. \textbf{(2) DeepConf-Mean} \citep{fu2025deepconf}: Weights answer votes by the average trace-level confidence $\bar{C}_\ell = \frac{1}{T}\sum_{t=1}^{T} C_{\ell,t}$. \textbf{(3) DeepConf-Tail} \citep{fu2025deepconf}: Filter to top $10\%$ traces by tail conf (2048 tail tokens only), then use weighted vote by the confidence of 2048 tail tokens only.  ${\textbf{(4) \text{Pass@1}}}$ results as lower bound. We implement all baselines using official published code.

{\bf Implementation Details.} For each question, we sample $L=512$ reasoning traces. Each trace is partitioned into $N=10$ position-normalized bins using the procedure in Section \ref{sec:bin}. For CDG, we use default hyperparameters $\alpha=0.5$ (count dampening) and $P\%=10\%$ (head/tail percentile) across all the experiments. We set $\beta=10$ for DeepSeek-R1-8B and gpt-oss-20B; $\beta = 3$ for Gemma-3-27B and QwQ-32B, based on the $\beta$ selection logic in Section \ref{sec:CDGvotingandbeta}. Inference time parameters are in Appendix (Table~\ref{tab:sampling_params}). To estimate $\beta$, we used a rotating cross-benchmark calibration scheme: $\beta$ is estimated on one benchmark (e.g., AIME 2024) and applied to the other three (AIME 2025, HMMT 2025, BRUMO 2025); we repeated this with each dataset serving in turn as the calibration set.
\vspace{-0.3cm}
\subsection{Main Results}
Table~\ref{tab:main_results} presents the accuracy of different answer selection methods. ``Majority'' is Majority Vote (Self-consistency) \cite{wang2022self}. ``DC-Mean'' is DeepConf-Mean, ``DC-Tail'' is DeepConf-Tail with top $10\%$ filtering as in \cite{fu2025deepconf}. ``D-CDG'' is for ``Degenerated-CDG'', i.e., an ablation of CDG (see Section \ref{sec:ablation}). 

{\bf CDG outperforms baselines.} CDG achieves the highest averaged accuracy across all models, demonstrating the consistent effectiveness of incorporating confidence dynamics into voting. On the overall average across all models and datasets, CDG improves over majority voting by $5.4\%$ and over DeepConf-Mean by $4.8\%$. For DeepConf-Tail method which uses a fixed set of filtered tail 2048 tokens for weighted voting, CDG without any filtering also is able to get an $1.7\%$ improvement. Most importantly, we show in section {\bf Number of Traces} and Figure {\ref{fig:ablationtraceandbeta}} that CDG maintains its superiority across varying $L$. Statistical significance test of CDG improvement over baseline methods is also presented in Table \ref{tab:significance_scaling} Appendix, showing reliability of CDG. The improvements are particularly pronounced on harder benchmarks (AIME 2025), where the discriminative signal from confidence dynamics is more valuable. DeepConf-Mean and DeepConf-Tail both capture useful confidence information, but ignoring the evolving dynamics in traces. In comparison, CDG has explicitly captured this contrast signal $\Delta C_\ell$, leveraging the theoretical insight in Section \ref{sec:theory}.

\begin{figure*}
    \centering
    \includegraphics[width=1\linewidth]{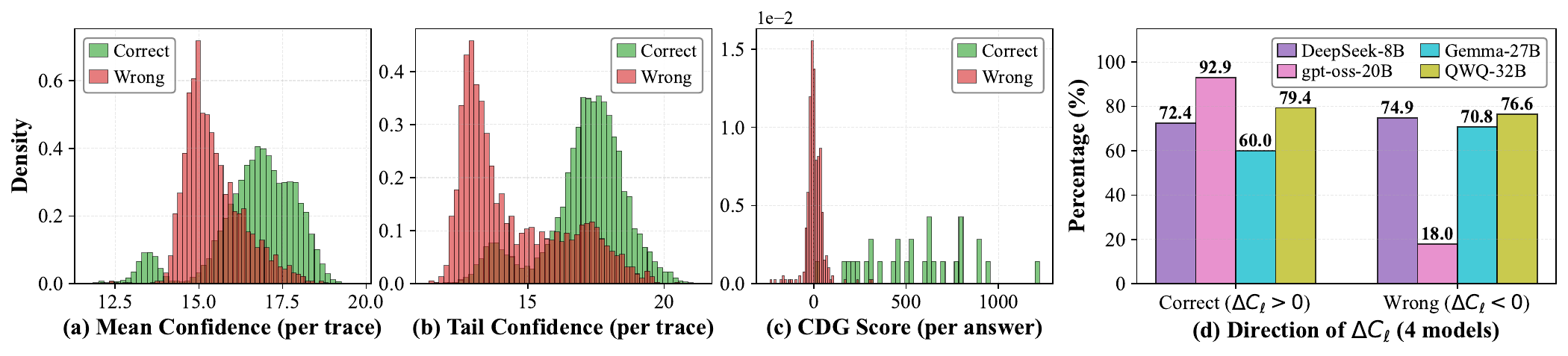}
    \vspace{-0.5cm}
    \caption{Confidence metric statistical analysis on AIME 2024 using DeepSeek-R1-8B. Wrong traces are marked in red while correct ones in green.(a) Mean token confidence distribution. (b) Tail token confidence (last 10\%). (c) CDG score (d)  Direction analysis for $\Delta C_{\ell}$.}
    \label{fig:histogramandbar}
    \vspace{-0.05cm}
\end{figure*}

\subsection{Ablation Study}
\label{sec:ablation}
To ablate the effect of $\alpha$ and $\beta$, we include "Degenerated-CDG" (abbreviated as D-CDG) in Table \ref{tab:main_results}, i.e., an ablation of CDG with either $\beta=0, \alpha=0.5$ (no confidence gain dynamics, but with count dampening) or $\alpha=1, \beta \neq 0$ (no count dampening, but with confidence dynamic gain). We include an ablation on $P$ in the Appendix (Figure~\ref{fig:position_ablation_4models}). All results reported in this subsection have variance below $0.09\%$ across random-seed runs.

{\bf Count dampening weight $\alpha$}. The count dampening term $|\mathcal{T}_a|^\alpha$ (where $\alpha < 1$) mitigates the dominance of majority voting over the confidence signal. Without dampening ($\alpha = 1$), raw answer counts can overwhelm subtle variations in confidence, effectively erasing the benefits of CDG. As shown in Table~\ref{tab:main_results}, the Degenerated-CDG (D-CDG) baseline with $\alpha=1, \beta \neq 0$ performs significantly worse than the complete CDG, verifying that count dampening is a critical component for the efficacy of CDG.

{\bf CDG weight $\beta$.} We study the effect of $\beta$, which controls the weight of CDG relative to mean confidence. In Table \ref{tab:main_results}, Degenerated-CDG (D-CDG) $\beta=0, \alpha=0.5$ completely removes the effect of $\Delta C_{\ell}$, and ignores trajectory dynamics, which only has the effect of Count dampening with $\alpha=0.5$. D-CDG therefore shows similar performance to DeepConf-Mean according to the definition in Eq. (\ref{eq:score}). Also, Figure~\ref{fig:ablationtraceandbeta}(d) shows accuracy across different $\beta$ values on AIME 2024. When $\beta$ decreases, CDG weighs less on trajectory dynamics, and gradually reduces to the performance of DeepConf-Mean. As $\beta$ increases, CDG increasingly favors traces with positive confidence gains peaking at $\beta \approx 10$. Beyond the estimated optimal range (green band), performance plateaus or slightly decreases due to over-reliance on the dynamic signal and ignoring the overall trace confidence. This confirms that CDG provides complementary information to static confidence.

{\bf Number of Traces $L$ (Budget).} For each question, we subsample $L \in \{8, 16, 32, 64, 128, 256\}$ traces from our full 512 sample pool, shuffle the traces, and sample 5 random runs for each $L$, we then evaluate voting accuracy against varying $L$. Figure~\ref{fig:ablationtraceandbeta}(a)(b)(c) show the accuracy curves for CDG, majority voting, and DeepConf methods. While all methods improve with more samples, CDG exhibits a steeper improvement curve than other state-of-the-arts, and maintains consistent advantage across all sample sizes $L$ with the same set of hyperparameters as used in Table \ref{tab:main_results}. This ablation reveals CDG's advantage persists across different computational budgets and confidence dynamics provide consistent benefits regardless of $L$. Full results of this ablation study are in Appendix.

\begin{table}[t]
\caption{Ablation on $\Delta C_{\ell}$. Metric: Accuracy ($\%$). CDG uses full $\Delta C_{\ell}$ in Eq. (\ref{eq:CDGdefine}), ``No Start'' uses only tail confidence for $\Delta C_{\ell}$ without subtraction. All use same parameters as in Table \ref{tab:main_results}. Drop is performance drop. Full table in Appendix.}
\label{tab:cdg_ablation_start}
\centering
\small
\begin{tabular}{lccc}
\toprule
\toprule
\textbf{Model} & \textbf{CDG} & \textbf{No Start} & \textbf{Drop} \\
\midrule
\toprule
DeepSeek-R1-8B   & \bf 90.8 & 84.2 & -6.6 \\
Gemma-3-27B       & \bf 41.7 & 36.7 & -5.0 \\
gpt-oss-20B     & \bf 85.8 & 84.2 & -1.6 \\
QwQ-32B           & \bf 80.0 & 75.8 & -4.2 \\
\midrule
\bf \textit{Overall Average} & \bf \textit{74.6} & \textit{70.2} & \textit{-4.4} \\
\bottomrule
\end{tabular}
\vspace{-0.5cm}
\end{table}

{\bf CDG is not merely reinforced tail confidence.} We also ablate the effect of CDG by comparing full CDG , i.e., $\Delta C_{\ell}$ in Eq. (\ref{eq:CDGdefine})
against using only tail confidence without subtraction of start confidence (``No Start'' in Table \ref{tab:cdg_ablation_start}), i.e., {\small{${{{\Delta{\Tilde{C_{\ell}}}} = \frac{1} {|T_{\text{tail}, P}|}\sum_{n \in T_{\text{tail}, P}} \bar{C}_{\ell}^{(n)}}}$}}. Results in Table \ref{tab:cdg_ablation_start} demonstrate that the CDG component is essential: removing the start confidence subtraction decreases accuracy, with losses even up to $6.6\%$ on DeepSeek-R1 (and $13.3\%$ drop on HMMT25, see Table~\ref{tab:cdg_ablation_start_whole} in Appendix). This confirms that the CDG captures reasoning dynamics rather than just leveraging on reinforced tail confidence in the voting: the start confidence subtraction normalizes for initial confidence levels and rewards traces showing confidence growth during reasoning. This also explains CDG's evident performance boost over DeepConf-Tail in Table \ref{tab:main_results}.  

{\bf Trace Length}. We also investigate the effect of trace length on CDG performance. For each question, we split 512 traces into a short pool (256 traces) and a long pool (256 traces), then ran CDG on each pool. Note in this case, the trace distribution therefore changed from the original answer distribution with 512 traces. We found CDG maintains advantage in both pools regardless of controlled lengths, We also found in general correct traces are shorter (t-test, $p < 0.001$), but Gemma on HMMT shows where correct traces are longer. These results suggest that CDG's advantage is not dependent on trace length differences, but rather the confidence dynamics along the reasoning trajectory. See Table~\ref{tab:trace_length} in Appendix~\ref{sec:trace_length} for detailed number comparisons. 

{\bf Effect of \verb|\boxed{}| tokens.} Since the final answer in all four benchmarks is extracted from the \verb|\boxed{}| token, a natural concern is whether CDG's signal is merely an artifact of these high-confidence formatting tokens at the trace tail. To rule this out, we recompute CDG across all 16 model--dataset configurations after excluding \verb|\boxed{}| tokens from the confidence trajectory. The resulting answer selections are essentially unchanged: the average accuracy difference is $0.0\%$, with $99\%$ per-question selection agreement (the rare disagreements pick different traces that map to the same final answer). Intuitively, \verb|\boxed{}| formatting carries large confidence in every trace and therefore cancels in the head-vs-tail comparison underlying CDG. This indicates that CDG's advantage is not driven by answer-formatting tokens, but by the genuine evolution of confidence along the reasoning trajectory.

\subsection{Score Distribution Analysis}
\label{sec:histogram}

To understand why CDG achieves superior voting accuracy, we analyze the distributions of different scoring metrics for correct versus incorrect traces. For each trace in our dataset, we compute three scores: (1) \textbf{Mean Confidence} $\bar{C}_\ell$: the average confidence across all tokens (used by DeepConf-Mean); (2) \textbf{Tail Confidence}: the mean confidence of tail 2048 tokens (used by DeepConf-Tail); and (3) \textbf{CDG Score} as in Eq. (\ref{eq:score}): our proposed score combining mean confidence and confidence dynamic gain and the effect of counts. We visualize these distributions as histograms, separately for correct (green) and incorrect (red) traces. Figure~\ref{fig:histogramandbar} presents the score distributions for DeepSeek-R1 on AIME 2024. Among all scoring metrics, CDG exhibits the clearest separation between correct and incorrect trace distributions. The histograms show minimal overlap between the two groups, indicating that CDG score is highly discriminative. We also study the population statistics of dynamic gain direction across four models in Figure \ref{fig:histogramandbar}(d), revealing that most of correct traces exhibit positive confidence gradients ($\Delta C_{\ell} > 0$) while majority of wrong traces show negative gradients ($\Delta C_{\ell} < 0$). This pattern confirms that reasoning quality correlates with increasing confidence trajectories across the population, providing empirical justification for CDG's motivation. For Gemma on AIME24, although we do not observe a significant frequency of negative CDG values for wrong traces, we observe attenuated gains compared to correct traces (Figure~\ref{fig:confidence}), which also explains the performance boost of CDG using Gemma model over baselines. Statistical significance tests for CDG's improvement over baselines are in Appendix (Table~\ref{tab:significance_scaling}).

\vspace{-0.3cm}
\section{Conclusion}
We reveal a novel discriminative pattern in LLM responses: correct traces exhibit systematically higher confidence gains from beginning to end compared to incorrect traces across diverse model architectures. Building on this insight, we proposed Confidence Dynamic Gain (CDG) voting, which incorporates confidence trajectory information into answer selection and generalizes majority voting and static confidence methods as special cases. We provide theoretical justification through analysis of GRPO training dynamics, showing that asymmetric advantage assignment combined with maximal token concentration at answer positions naturally induces divergent confidence gradients. Experiments on challenging benchmarks demonstrate consistent improvements over baselines, establishing CDG as a principled training-free answer selection method in LLM reasoning.

\section*{Impact Statement}
\label{sec:impact}

This paper presents work whose goal is to advance the field of Machine Learning, specifically improving answer selection in LLM reasoning through confidence dynamics analysis. Our method is training-free and operates on existing model outputs, introducing negligible additional computational or environmental costs beyond standard inference. By improving the reliability of LLM reasoning without requiring model retraining, CDG voting could contribute to safer deployment of AI systems in educational and scientific applications. We do not foresee specific negative societal consequences that must be highlighted here. We also note several limitations of the current work that scope these claims. \textbf{Compute and latency of Best-of-$N$ inference.} CDG inherits the cost profile of repeated sampling: generating $L$ traces per query multiplies both compute and wall-clock latency relative to single-sample decoding. The CDG aggregation itself is essentially free (a single pass over per-token logprobs), but trace generation dominates the cost. Our scaling analysis (Figure~\ref{fig:ablationtraceandbeta} and Figure~\ref{fig:scaling_4x4} in Appendix) shows CDG retains its advantage even at small $L$, partially mitigating this overhead, but Best-of-$N$ remains less suitable for strict latency-bound deployments. \textbf{Benchmark scope.} Our main evaluation focuses on competition-level mathematical reasoning (AIME 2024/2025, HMMT 2025, BRUMO 2025), with an initial generalization study on GPQA-Diamond (Appendix~\ref{sec:gpqa_diamond}). A broader evaluation on code generation (e.g., LiveCodeBench) and general reasoning (e.g., MMLU-Pro, BBH) is left to future work to more fully characterize the scope of confidence dynamics across task types. \textbf{Token-level confidence access and model-specific tuning.} CDG requires top-$K$ token logprobs at every decoding step, which is currently exposed by open-source serving stacks (e.g., vLLM) but not always by all closed-source APIs. In addition, the CDG weight $\beta$ is model-dependent; we provide a principled scaling rule $\beta \in [0.5 r_b, 1.5 r_b]$ (Section~\ref{sec:CDGvotingandbeta}), but estimating $r_b$ still requires a small per-model validation set rather than being fully automatic.

\bibliography{example_paper}
\bibliographystyle{icml2026}


\appendix
\onecolumn

\section{Proofs}
\label{sec:proofs}

This section of the appendix provides detailed proofs for all theoretical results in Section~\ref{sec:theory}, empirical validation of assumptions, and supplementary discussion.

\subsection{Proof of Theorem~\ref{thm:grpo} (GRPO Advantage Computation)}
\label{proof:grpo}

\begin{proof}
Consider a GRPO training batch with $G$ answers, where $k$ are correct ($r_i = 1$) and $(G-k)$ are incorrect ($r_i = 0$).

\textbf{Step 1: Mean reward.} $\bar{r} = \frac{1}{G}\sum_{j=1}^G r_j = \frac{k}{G}$.

\textbf{Step 2: Variance.} Using $\sigma^2 = \mathbb{E}[r^2] - (\mathbb{E}[r])^2$:
\begin{equation}
\mathbb{E}[r^2] = \frac{k}{G}, \quad (\mathbb{E}[r])^2 = \frac{k^2}{G^2}, \quad \Rightarrow \quad \sigma_r^2 = \frac{k(G-k)}{G^2}, \quad \sigma_r = \frac{\sqrt{k(G-k)}}{G}.
\end{equation}

\textbf{Step 3: Advantages.}
\begin{align}
A_{\text{correct}} &= \frac{1 - k/G}{\frac{\sqrt{k(G-k)}}{G}} = \frac{G-k}{\sqrt{k(G-k)}} = \sqrt{\frac{G-k}{k}} > 0, \\
A_{\text{incorrect}} &= \frac{0 - k/G}{\frac{\sqrt{k(G-k)}}{G}} = \frac{-k}{\sqrt{k(G-k)}} = -\sqrt{\frac{k}{G-k}} < 0.
\end{align}
\end{proof}

\subsection{Discussion of Assumption~\ref{asmp:model} (Simplified Training Model)}
\label{sec:model_discussion}

Assumption~\ref{asmp:model} introduces simplifications to make analysis tractable:

\textbf{(M1) Token-Level Gradient Aggregation.} We model cumulative training effect on logit $\phi_t(v)$ as proportional to advantage-weighted frequency:
\begin{equation}
\mathbb{E}_{\text{train}}[\Delta \phi_t(v)] \propto A_{\text{correct}} \cdot n_t^+(v) + A_{\text{incorrect}} \cdot n_t^-(v).
\end{equation}
This treats logits as if they were direct parameters, whereas in reality $\phi_t = W_{\text{out}}h_t$ depends on all transformer layers. The approximation is valid to the extent that gradients $\frac{\partial \mathcal{L}}{\partial \phi_t}$ accumulate proportionally to advantage weights, ignoring attention coupling and layer-wise gradient flow. We give the proof and validation of this assumption as follows.

{\bf Formal Derivation of Token-Level Gradient Aggregation (M1)}
\label{sec:M1_derivation}

We show that (M1) follows directly from the GRPO policy gradient structure.

\begin{proposition}[Token-Level Gradient Aggregation]\label{prop:M1}
Under the GRPO objective, the cumulative gradient update for token $v$ at position $t$ satisfies:
\begin{equation}
\Delta \log \pi_\theta(v \mid x_{<t}) \;\propto\; A_{\mathrm{correct}} \cdot n_t^{+}(v) + A_{\mathrm{incorrect}} \cdot n_t^{-}(v).
\end{equation}
\end{proposition}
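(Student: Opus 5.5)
We would derive Proposition~\ref{prop:M1} directly from the GRPO policy-gradient estimator, viewing the logits as free parameters as Assumption~\ref{asmp:model} already does. The GRPO surrogate for a group $\{a_1,\dots,a_G\}$ is, to first order (ignoring clipping and the KL penalty, which vanish at the on-policy point),
\begin{equation}
J(\theta) = \frac{1}{G}\sum_{i=1}^{G} \frac{1}{|a_i|}\sum_{t} A_i \log \pi_\theta(a_{i,t}\mid x, a_{i,<t}).
\end{equation}
A gradient step gives $\Delta\theta = \eta \nabla_\theta J$. The first step is to write this as a sum over trace--position pairs $(i,t)$ of $A_i \nabla_\theta \log\pi_\theta(a_{i,t}\mid\cdot)$. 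We absorb the length normalisation $1/|a_i|$ into the proportionality constant by treating trace lengths as comparable, or by working with position-normalised bins as in Section~\ref{sec:bin}.

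Next we group the terms by sampled token. At a fixed position $t$, every trace with $a_{i,t}=v$ contributes a gradient on $\log\pi_\theta(v\mid\cdot)$ weighted by $A_i$. By Theorem~\ref{thm:grpo}, $A_i$ takes only two values, $A_{\text{correct}}$ and $A_{\text{incorrect}}$. The contribution to token $v$ at position $t$ is therefore
\begin{equation}
A_{\text{correct}}\, n_t^+(v) + A_{\text{incorrect}}\, n_t^-(v)
\end{equation}
times a common per-occurrence gradient direction. We make this explicit in the softmax-logit parametrisation, where
\begin{equation}
\partial \log\pi(v)/\partial\phi_t(u) = \mathbb{1}[u=v]-\pi(u).
\end{equation}
The diagonal term yields the counted contribution to $\phi_t(v)$, scaled by $(1-\pi(v))$.

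Finally, we translate the logit change into $\Delta\log\pi_\theta(v\mid x_{<t})$ by a first-order Taylor expansion, $\Delta\log\pi(v) = (1-\pi(v))\,\Delta\phi_t(v) - \sum_{u\ne v}\pi(u)\,\Delta\phi_t(u)$. This gives the stated proportionality up to the factor $\eta(1-\pi(v))/G$.

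The main obstacle is that the contexts $x_{<t}$ differ across traces. Occurrences of $v$ at position $t$ in different traces are therefore, strictly, updates to different conditional distributions. Another issue is that the cross terms $-\pi(u)$ from other tokens' updates, together with the parameter sharing through $W_{\text{out}}h_t$, contaminate the clean count. We would handle both by an explicit tabular/first-order approximation. We treat $\phi_t(\cdot)$ as shared across contexts at the same (normalised) position, in the spirit of (M1)--(M2). We bound the off-diagonal contributions by $\max_u \pi(u)\sum_{u\ne v}|\Delta\phi_t(u)|$ and argue they are lower order when the policy is peaked, or when we only compare sums over head versus tail positions. If this bound proves too weak, we would state the proposition for the logit $\phi_t(v)$ rather than $\log\pi$. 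In that form the diagonal term is exact, and the claim reduces to the count identity above.
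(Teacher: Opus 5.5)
Your argument is essentially the paper's own proof. You write the GRPO gradient as $\sum_i A_i\nabla_\theta\log\pi_\theta$, collect the traces that emit $v$ at position $t$, and use the two-valued advantage of Theorem~\ref{thm:grpo} to split $\sum_{i\in\mathcal{T}_v}A_i$ into $A_{\mathrm{correct}}\,n_t^{+}(v)+A_{\mathrm{incorrect}}\,n_t^{-}(v)$; your softmax-level caveats (differing contexts, cross terms, $\log\pi$ versus logit) are extra care the paper skips.

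Your logit fallback needs no bounding of cross terms. In your tabular, comparable-length model, $\partial J/\partial\phi_t(v)\propto\sum_{i:\,a_{i,t}=v}A_i-\pi_t(v)\sum_i A_i$, and GRPO advantages sum to zero ($kA_{\mathrm{correct}}+(G-k)A_{\mathrm{incorrect}}=0$), so $\Delta\phi_t(v)$ equals the count expression exactly, with no $(1-\pi(v))$ factor. For $\log\pi$, the leftover term $-\sum_u\pi_t(u)\,\Delta\phi_t(u)$ is a $v$-independent normalisation shift, not a term that becomes small when the policy is peaked, so your bound there would not work but the logit form is exact.
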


\begin{proof}
\textbf{Step 1: GRPO gradient.}
The GRPO policy gradient aggregates over all $G$ traces:
\begin{equation}
\nabla_\theta \mathcal{L} = \sum_{i=1}^{G} A_i \cdot \nabla_\theta \log \pi_\theta(a_i \mid x).
\end{equation}

\textbf{Step 2: Token-level decomposition.}
For a specific token $v$ at position $t$, the gradient contribution comes from the subset of traces that generate $v$ at position $t$. Let $\mathcal{T}_v = \{i : a_i^t = v\}$. Then:
\begin{equation}
\Delta \log \pi_\theta(v \mid x_{<t}) \;\propto\; \sum_{i \in \mathcal{T}_v} A_i.
\end{equation}

\textbf{Step 3: Partition by correctness.}
Since all correct traces share $A_{\mathrm{correct}}$ and all incorrect traces share $A_{\mathrm{incorrect}}$ (Theorem~\ref{thm:grpo}):
\begin{align}
\sum_{i \in \mathcal{T}_v} A_i &= A_{\mathrm{correct}} \cdot \underbrace{|\mathcal{T}_v \cap \text{Correct}|}_{n_t^+(v)} + A_{\mathrm{incorrect}} \cdot \underbrace{|\mathcal{T}_v \cap \text{Incorrect}|}_{n_t^-(v)}.
\end{align}
\end{proof}

The result is a direct consequence of the GRPO gradient structure: each trace contributes its advantage $A_i$ to the update of every token it contains. Since advantages are constant within correctness groups (Theorem~\ref{thm:grpo}), the total update factors into advantage-weighted frequency counts. This justifies (M1) without additional assumptions beyond the GRPO objective itself.


\textbf{(M2) First-Order Inference Approximation.} We model:
\begin{equation}
\phi_{\text{inference}}(v) \approx \phi_{\text{base}}(v) + \eta_{\text{eff}} \cdot \mathbb{E}_{\text{train}}[\Delta \phi(v)],
\end{equation}
where $\eta_{\text{eff}}$ aggregates training effects. This ignores non-linearity, normalization layers, distributional shift, and gradient evolution over epochs. Despite these limitations, it captures that training systematically shifts logits along advantage-weighted gradients.

\textbf{(M3) Confidence Monotonicity.} Confidence $C_t$ (KL divergence from uniform) increases with selected token logit. Formalized in Lemma~\ref{lem:confidence} below.

\textbf{Summary.} These provide a simplified toy model explaining qualitative patterns rather than quantitative predictions.

{\bf Proof of Lemma: Confidence-Logit Relationship}
\label{proof:confidence_logit}

\begin{lemma}[Confidence-Logit Relationship]\label{lem:confidence}
For confidence $C_t = -\frac{1}{K}\sum_{j \in \text{top-K}} \log p_j$ where $p_j = \frac{e^{\phi_j}}{\sum_k e^{\phi_k}}$, if the selected token $v_t$ is among top-K and its logit increases by $\Delta \phi_t(v_t) > 0$ while others remain constant:
\begin{equation}
\Delta C_t \geq \frac{1}{K} \cdot \frac{\Delta \phi_t(v_t)}{1 + e^{\Delta \phi_t(v_t)}} > 0.
\end{equation}
\end{lemma}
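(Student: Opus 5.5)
The plan is to compute $\Delta C_t$ exactly rather than bound it indirectly. This works because the perturbation changes only one logit, so every $\log p_j$ moves by an explicit amount.

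\textbf{Step 1: the top-K set does not change.} Write $\delta=\Delta\phi_t(v_t)>0$, $Z=\sum_k e^{\phi_k}$ and $q=p_{v_t}$. Raising $\phi_{v_t}$ leaves the relative order of all other logits intact, and $v_t$ can only move up. So if $v_t$ was in the top-K set it stays there, and the set itself is unchanged. This lets us compare $C_t$ before and after the update over the same index set.

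\textbf{Step 2: exact formula.} The new partition function is
\[
Z'=Z\bigl(1+q(e^{\delta}-1)\bigr).
\]
For $j\neq v_t$ in the top-K set, $\log p_j$ changes by $-\log\bigl(1+q(e^{\delta}-1)\bigr)$. For $j=v_t$ it changes by $\delta-\log\bigl(1+q(e^{\delta}-1)\bigr)$. Summing over the $K$ indices and multiplying by $-1/K$ gives
\[
\Delta C_t=\log\bigl(1+q(e^{\delta}-1)\bigr)-\frac{\delta}{K}.
\]

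\textbf{Step 3: lower bound.} The function $q\mapsto\log(1+q(e^{\delta}-1))$ is concave, equals $0$ at $q=0$, and equals $\delta$ at $q=1$. Hence $\log(1+q(e^{\delta}-1))\ge q\delta$, and therefore $\Delta C_t\ge \delta\,(q-1/K)$. The claimed bound $\Delta C_t\ge \frac{1}{K}\frac{\delta}{1+e^{\delta}}$ then holds whenever
\[
q\ \ge\ \frac{1}{K}\Bigl(1+\frac{1}{1+e^{\delta}}\Bigr).
\]
The right-hand side is at most $\frac{3}{2K}$, so $p_{v_t}\ge 3/(2K)$ suffices uniformly in $\delta$. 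Strict positivity follows because $\delta>0$.

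\textbf{Main obstacle.} The real difficulty is that the statement is false without some condition on $p_{v_t}$. If $v_t$ sits in the top-K set but with tiny probability, then $\log(1+q(e^\delta-1))\approx 0$ and $\Delta C_t\approx-\delta/K<0$. Raising one low-probability logit flattens the renormalized mass on the other $K-1$ tokens and lowers the confidence.

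I would therefore state the lemma with the explicit hypothesis $p_{v_t}\ge 3/(2K)$. This is mild: for $K=20$ it means $p_{v_t}\ge 0.075$, which is typical of the sampled token in low-temperature reasoning traces. I would note that (M3) should be read with this proviso. If a hypothesis-free version is wanted, the fallback is to assume $v_t$ is the arg-max token. Then $q\ge 1/K$, and a slightly sharper convexity argument is needed to gain the extra $\frac{1}{K(1+e^\delta)}$ term. I expect that sharper argument to be the delicate part.
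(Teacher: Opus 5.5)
Your diagnosis is right, and your argument is correct once your added hypothesis is in place. Up to the exact identity $\Delta C_t=\log\bigl(1+q(e^{\delta}-1)\bigr)-\delta/K$ you follow the same computation as the paper. Your Step 1 is cleaner: the top-$K$ set is exactly invariant under raising $\phi_{v_t}$, whereas the paper calls this only a first-order approximation.

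The two routes diverge at the finish. The lemma as stated carries no probability hypothesis, so your small-$q$ counterexample refutes it as written. The paper's proof silently assumes $p_{v_t}\ge 1/K$ (``typically''). It then chains $\log(1+y)\ge y/(1+y)$ with $e^{\delta}-1\ge \delta e^{-\delta}$ to claim the bound. That finish cannot be repaired. At $q=1/K$ your exact identity gives $\Delta C_t=\frac{(K-1)\delta^2}{2K^2}+O(\delta^3)$, while the target is $\frac{\delta}{2K}+O(\delta^2)$. For example, with $K=20$ and $\delta=0.1$ one gets roughly $2.4\times 10^{-4}$ against $2.4\times 10^{-3}$.

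Your chord bound $\log\bigl(1+q(e^{\delta}-1)\bigr)\ge q\delta$, together with $p_{v_t}\ge 3/(2K)$, is the correct repair. The threshold is also sharp as a condition uniform in $\delta$. For any fixed $q<3/(2K)$, divide both sides by $\delta$ and let $\delta\to 0$: the left side tends to $q-1/K$ and the right side to $1/(2K)$, so the inequality fails for small $\delta$.

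For the same reason you should drop your fallback paragraph. First, being the arg-max token does not give $q\ge 1/K$; it gives only $q\ge 1/V$ over the full vocabulary, and a nearly flat distribution is a counterexample. Second, even granting $q\ge 1/K$, the expansion above shows that no sharper convexity argument can yield the stated bound. The ``delicate part'' you anticipate is impossible, and it is exactly where the paper's proof breaks.

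The version you should state and prove is: if $v_t\in\mathcal{K}_t$ and $p_{v_t}\ge 3/(2K)$, then $\Delta C_t\ge \delta\,(p_{v_t}-1/K)\ge \frac{1}{K}\cdot\frac{\delta}{1+e^{\delta}}>0$.
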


\begin{proof}
Let $C(\phi) = -\frac{1}{K}\sum_{j \in \mathcal{K}} \log p_j(\phi)$ where $\mathcal{K}$ is the top-K set.

\textbf{Step 1: Compute derivative.} For $i \in \mathcal{K}$:
\begin{align}
\frac{\partial \log p_i}{\partial \phi_i} &= \frac{\partial}{\partial \phi_i} \left( \phi_i - \log \sum_k e^{\phi_k} \right) = 1 - p_i, \\
\frac{\partial \log p_j}{\partial \phi_i} &= -p_i \quad \text{for } j \neq i.
\end{align}

Thus:
\begin{equation}
\frac{\partial C}{\partial \phi_i} = -\frac{1}{K} \sum_{j \in \mathcal{K}} \frac{\partial \log p_j}{\partial \phi_i} = -\frac{1}{K} \left[ (1 - p_i) - (K-1)p_i \right] = \frac{1}{K}(Kp_i - 1).
\end{equation}

For selected token $v_t$ in top-K, typically $p_{v_t} > 1/K$, giving $\frac{\partial C}{\partial \phi_{v_t}} > 0$.

\textbf{Step 2: Lower bound via probability analysis.}
We analyze the change in confidence when $\phi_i$ increases, treating the
top-K set $\mathcal{K}$ as fixed (a valid first-order approximation when
the selected token is already in top-K and perturbations are moderate).

When $\phi_i$ increases by $\Delta\phi_i$, the partition function changes to
$Z' = Z + e^{\phi_i}(e^{\Delta\phi_i} - 1) = Z(1 + p_i(e^{\Delta\phi_i} - 1))$.

The new probabilities are:
\begin{align}
p_i' &= \frac{e^{\phi_i + \Delta\phi_i}}{Z'} = \frac{p_i e^{\Delta\phi_i}}{1 + p_i(e^{\Delta\phi_i} - 1)}, \\
p_j' &= \frac{e^{\phi_j}}{Z'} = \frac{p_j}{1 + p_i(e^{\Delta\phi_i} - 1)} \quad \text{for } j \neq i.
\end{align}

The confidence change over the fixed top-K set is:
\begin{align}
\Delta C &= -\frac{1}{K}\sum_{j \in \mathcal{K}} (\log p_j' - \log p_j) \\
&= -\frac{1}{K}\left[\Delta\phi_i - \log(1 + p_i(e^{\Delta\phi_i} - 1)) - (K-1)\log(1 + p_i(e^{\Delta\phi_i} - 1))\right] \\
&= \frac{1}{K}\left[K\log(1 + p_i(e^{\Delta\phi_i} - 1)) - \Delta\phi_i\right].
\end{align}

For the selected token $v_t$ with $p_{v_t} \geq 1/K$, let $x = e^{\Delta\phi_{v_t}} - 1 \geq 0$.
Then:
\begin{equation}
\Delta C = \frac{1}{K}[K\log(1 + p_{v_t} x) - \log(1+x)].
\end{equation}

Since $p_{v_t} \geq 1/K$, we have $p_{v_t} x \geq x/K$. Using the inequality
$\log(1+y) \geq \frac{y}{1+y}$ for $y \geq 0$ and noting that
$e^{\Delta\phi_{v_t}} - 1 \geq \frac{\Delta\phi_{v_t}}{e^{\Delta\phi_{v_t}}}$
(which follows from $e^y(1-y/e^y) \leq 1$ for $y \geq 0$):
\begin{equation}
\Delta C \geq \frac{1}{K} \cdot \frac{\Delta\phi_{v_t}}{1 + e^{\Delta\phi_{v_t}}} > 0.
\end{equation}
\end{proof}

\subsection{Proof of Theorem~\ref{thm:logit_updates} (Asymmetric Logit Updates)}
\label{proof:logit_updates}

\begin{proof}
By Assumption~\ref{asmp:model}(M1):
\begin{equation}
\mathbb{E}_{\text{train}}[\Delta \phi_t(v)] \propto A_{\text{correct}} \cdot n_t^+(v) + A_{\text{incorrect}} \cdot n_t^-(v).
\end{equation}

\textbf{Correct traces.} By Assumption~\ref{asmp:concentration}(A1), $n_T^+(v^*) = k$ (all converge to ground truth). By (A2), $\max_{t<T} n_t^+(v) \leq k/M$ (diverse reasoning). Since $n_T^-(v^*) = 0$ (incorrect traces produce different answers):
\begin{align}
\mathbb{E}[\Delta \phi_{\text{tail}}^{\text{correct}}] &\propto A_{\text{correct}} \cdot k = \sqrt{k(G-k)}, \\
\mathbb{E}[\Delta \phi_{\text{head}}^{\text{correct}}] &\propto A_{\text{correct}} \cdot \frac{k}{M} = \frac{1}{M}\sqrt{k(G-k)}.
\end{align}
Therefore, the ratio is:
\begin{equation}
\frac{\mathbb{E}[\Delta \phi_{\text{tail}}^{\text{correct}}]}{\mathbb{E}[\Delta \phi_{\text{head}}^{\text{correct}}]} = \frac{k}{k/M} = M.
\end{equation}

\textbf{Incorrect traces.} By the generic form:
\begin{align}
\mathbb{E}[\Delta \phi_{\text{tail}}^{\text{incorrect}}] &\propto A_{\text{incorrect}} \cdot \mathbb{E}[n_T^-], \\
\mathbb{E}[\Delta \phi_{\text{head}}^{\text{incorrect}}] &\propto A_{\text{incorrect}} \cdot \mathbb{E}[n_{<T}^-].
\end{align}
Therefore, the ratio is:
\begin{equation}
\frac{\mathbb{E}[\Delta \phi_{\text{tail}}^{\text{incorrect}}]}{\mathbb{E}[\Delta \phi_{\text{head}}^{\text{incorrect}}]} = \frac{\mathbb{E}[n_T^-]}{\mathbb{E}[n_{<T}^-]} \leq \gamma M \quad \text{by Assumption~\ref{asmp:concentration}(A3)}.
\end{equation}
\end{proof}

\subsection{Proof of Theorem~\ref{thm:main} (Confidence Gain Separation)}
\label{proof:main}

\begin{proof}
By Assumption~\ref{asmp:model}(M2), $\phi_{\text{inference}} = \phi_{\text{base}} + \eta_{\text{eff}} \cdot \mathbb{E}_{\text{train}}[\Delta \phi]$.

\textbf{Step 1: Correct traces.} Confidence gain $\Delta C^{\text{correct}} = C_{\text{tail}} - C_{\text{head}}$. By Lemma~\ref{lem:confidence} and Theorem~\ref{thm:logit_updates}:
\begin{equation}
\mathbb{E}[\Delta C^{\text{correct}}] \gtrsim \frac{\eta_{\text{eff}}}{K} \left[ \mathbb{E}[\Delta \phi_{\text{tail}}] - \mathbb{E}[\Delta \phi_{\text{head}}] \right] = \frac{\eta_{\text{eff}} \cdot c_1}{K} \cdot \sqrt{k(G-k)} \left(1 - \frac{1}{M}\right),
\end{equation}
where $c_1 > 0$ is the proportionality constant.

\textbf{Step 2: Incorrect traces.} Updates are negative (suppression). Tail receives stronger suppression:
\begin{align}
\mathbb{E}[\Delta C^{\text{incorrect}}] &\lesssim \frac{\eta_{\text{eff}}}{K} \cdot c_1 \cdot A_{\text{incorrect}} \cdot \mathbb{E}[n_{<T}^-] \left[ \frac{\mathbb{E}[n_T^-]}{\mathbb{E}[n_{<T}^-]} - 1 \right] \\
&\leq \frac{\eta_{\text{eff}}}{K} \cdot c_1 \cdot A_{\text{incorrect}} \cdot \mathbb{E}[n_{<T}^-] \cdot (\gamma M - 1).
\end{align}

Since $A_{\text{incorrect}} < 0$ and assuming $\mathbb{E}[n_{<T}^-] \sim \mathcal{O}(\sqrt{k(G-k)})$:
\begin{equation}
\mathbb{E}[\Delta C^{\text{incorrect}}] \lesssim -\frac{\eta_{\text{eff}}}{K} \cdot c_1 \cdot \sqrt{k(G-k)} \cdot (\gamma M - 1).
\end{equation}

\textbf{Step 3: Separation.}
\begin{align}
&\mathbb{E}[\Delta C \mid \text{Correct}] - \mathbb{E}[\Delta C \mid \text{Incorrect}] \\
&\geq \frac{\eta_{\text{eff}} \cdot c_1}{K} \cdot \sqrt{k(G-k)} \left[ \left(1 - \frac{1}{M}\right) + (\gamma M - 1) \right] = \frac{\eta_{\text{eff}} \cdot c_1}{K} \cdot \sqrt{k(G-k)} \left(\gamma M - \frac{1}{M}\right).
\end{align}

For $M \geq 2$ and $\gamma > 1/(M^2)$, this is strictly positive. Setting $c = c_1(\gamma M - 1/M)/K$ gives the result.
\end{proof}

\subsection{Generalization Beyond GRPO}

While we focused on GRPO (Theorem~\ref{thm:grpo}), we expect these insights to extend to other advantage-weighted policy gradient methods where: (1) positive advantages reinforce correct traces proportionally to token frequency, (2) negative advantages suppress incorrect traces, and (3) token frequencies exhibit the concentration pattern (Assumption~\ref{asmp:concentration}). This likely includes PPO variants with group normalization, reward-weighted regression, and similar RL algorithms with verifiable rewards. While the specific advantage form may differ, the qualitative mechanism (frequency-dependent reinforcement creating head-tail gaps) should remain if these structural properties hold.

\clearpage

\section{More empirical results}
\subsection{Complete result for Table 2 in main paper}
Table \ref{tab:cdg_ablation_start_whole} in this Appendix presents the full results summarized as in Table \ref{tab:cdg_ablation_start} in the main paper.
\begin{table}[h]
\caption{Ablation study on gradient differential. CDG uses full gradient (tail$-$start), ``No Start'' uses only tail. All use $\alpha=0.5$, $P=10\%$. DeepSeek/gpt-oss: $\beta=10$; Gemma/QwQ: $\beta=3$.}
\label{tab:cdg_ablation_start_whole}
\centering
\small
\begin{tabular}{llccc}
\toprule
\textbf{Model} & \textbf{Dataset} & \textbf{CDG} & \textbf{No Start} & \textbf{Drop} \\
\midrule
\multirow{5}{*}{DeepSeek}
 & AIME'24    & \bf 93.3 & 90.0 & -3.3 \\
 & AIME'25    & \bf 93.3 & 83.3 & -10.0 \\
 & BRUMO'25   & \bf 93.3 & 93.3 & 0.0 \\
 & HMMT'25    & \bf 83.3 & 70.0 & -13.3 \\
\cmidrule{2-5}
 & \textit{Avg} & \bf \textit{90.8} & \textit{84.2} & \textit{-6.6} \\
\midrule
\multirow{5}{*}{Gemma}
 & AIME'24    & \bf 56.7 & 50.0 & -6.7 \\
 & AIME'25    & \bf 40.0 & 30.0 & -10.0 \\
 & BRUMO'25   & \bf 46.7 & 46.7 & 0.0 \\
 & HMMT'25    & \bf 23.3 & 20.0 & -3.3 \\
\cmidrule{2-5}
 & \textit{Avg} & \bf \textit{41.7} & \textit{36.7} & \textit{-5.0} \\
\midrule
\multirow{5}{*}{QwQ}
 & AIME'24    & \bf 90.0 & 90.0 & 0.0 \\
 & AIME'25    & \bf 76.7 & 76.7 & 0.0 \\
 & BRUMO'25   & \bf 90.0 & 80.0 & -10.0 \\
 & HMMT'25    & \bf 63.3 & 56.7 & -6.6 \\
\cmidrule{2-5}
 & \textit{Avg} & \bf \textit{80.0} & \textit{75.8} & \textit{-4.2} \\
\midrule
\multirow{5}{*}{gpt-oss}
 & AIME'24    & \bf 93.3 & 93.3 & 0.0 \\
 & AIME'25    & \bf 93.3 & 90.0 & -3.3 \\
 & BRUMO'25   & \bf 83.3 & 83.3 & 0.0 \\
 & HMMT'25    & \bf 73.3 & 70.0 & -3.3 \\
\cmidrule{2-5}
 & \textit{Avg} & \bf \textit{85.8} & \textit{84.2} & \textit{-1.6} \\
\midrule
 & \bf \textit{Overall} & \bf \textit{74.6} & \textit{70.2} & \textit{-4.4} \\
\bottomrule
\end{tabular}
\end{table}

\clearpage

\subsection{Extended Scaling and Hyperparameter Analysis}

\subsubsection{Trace Scaling Analysis}

Figure~\ref{fig:scaling_4x4} here in the Appendix extends the scaling analysis from Figure~\ref{fig:ablationtraceandbeta}(a-c) in the main paper to the complete $4 \times 4$ model-benchmark matrix. Across all 16 configurations, there are 15 out of 16 settings that CDG outperforms majority voting, DeepConf-Mean, and DeepConf-Tail baselines. The advantage persists across different computational budgets (from $L=8$ to $L=512$ traces), demonstrating that CDG extracts more discriminative signal per trace regardless of sample size.

\begin{figure*}[!ht]
    \centering
    \includegraphics[width=1\linewidth]{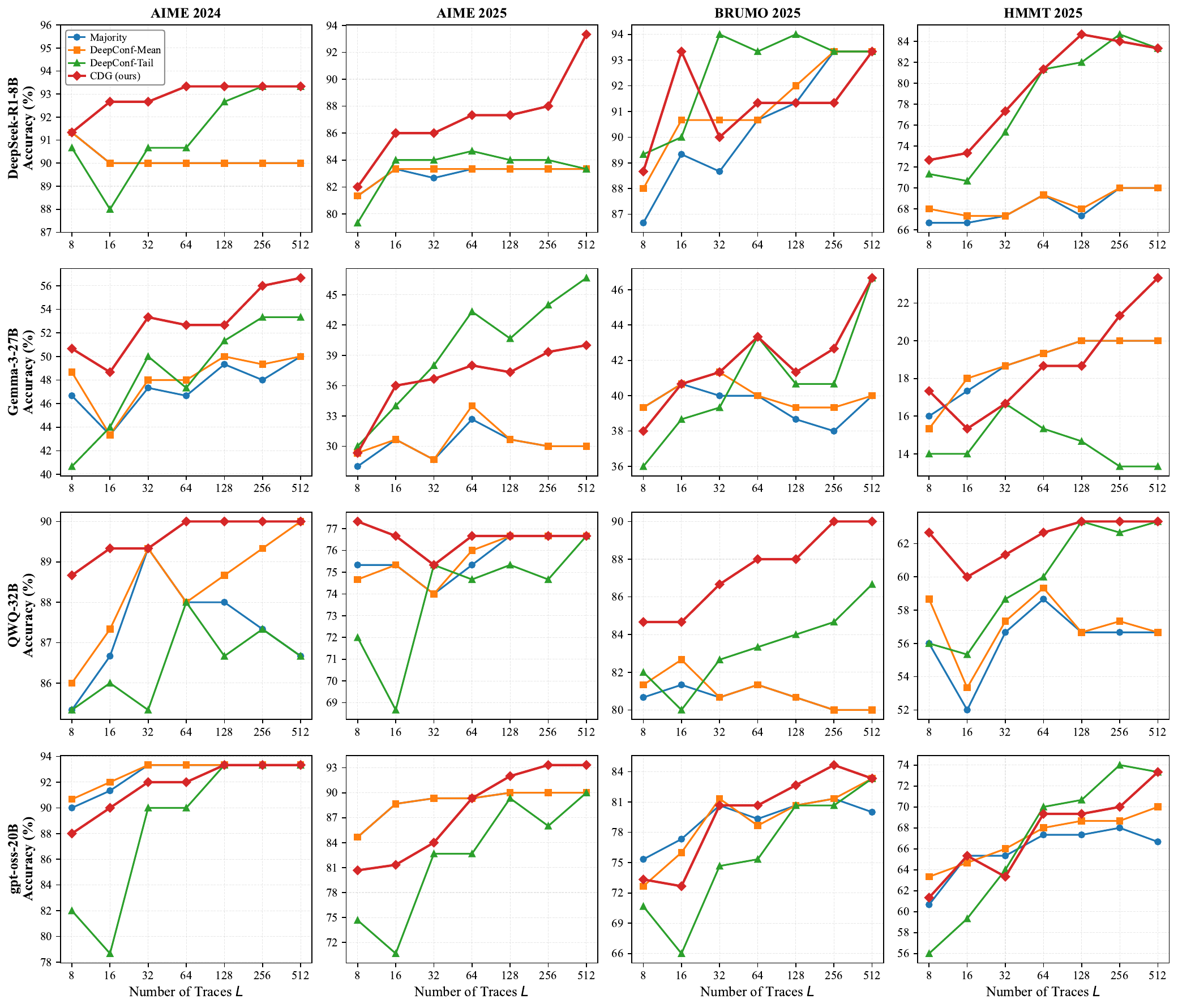}
    \caption{Extended scaling analysis: Accuracy vs.\ number of traces $L$ for all four models (DeepSeek-R1-8B, Gemma-3-27B, gpt-oss-20B, QwQ-32B) across all four benchmarks (AIME 2024, AIME 2025, BRUMO 2025, HMMT 2025). Accuracies are averaged across 5 different random inference runs. CDG consistently outperforms baselines across all configurations.}
    \label{fig:scaling_4x4}
\end{figure*}

\subsubsection{Small-Budget Regime ($L=5$--$15$)}

As suggested by reviewer, we further include partial results $L=5$--$15$ for two models. When $L$ is small, answer counts are noisy estimates of answer likelihood, but CDG still offers consistent gains at $L=5$--$15$, as shown in Table~\ref{tab:small_L}.

\begin{table}[h]
\caption{Accuracy (\%) at small trace budgets $L \in \{5, 10, 15\}$. Best result in each row is in \textbf{bold}.}
\label{tab:small_L}
\centering
\small
\begin{tabular}{llccc}
\toprule
\textbf{Model} & $L$ & \textbf{Majority-Vote} & \textbf{DC-Tail} & \textbf{CDG} \\
\midrule
\multirow{3}{*}{DeepSeek-R1-8B}
 & 5  & 71.5 & 71.3 & \textbf{72.2} \\
 & 10 & 72.7 & 73.3 & \textbf{74.5} \\
 & 15 & 71.3 & 73.7 & \textbf{74.0} \\
\midrule
\multirow{3}{*}{Gemma-3-27B}
 & 5  & 28.7 & 29.5 & \textbf{31.0} \\
 & 10 & 30.7 & 31.0 & \textbf{32.8} \\
 & 15 & 31.0 & 31.2 & \textbf{33.3} \\
\bottomrule
\end{tabular}
\end{table}

\subsubsection{CDG Weight $\beta$ Ablation}

Figure~\ref{fig:beta_ablation_4models} extends the $\beta$ ablation from Figure~\ref{fig:ablationtraceandbeta}(d) in the main paper to all four models. The green shaded region indicates the theoretically optimal $\beta$ range, computed as $[0.5r_b, 1.5r_b]$ where $r_b$ is the model-specific ratio derived from our theoretical analysis. The estimated $r_b$ values are: DeepSeek-R1-8B ($r_b=7.87$), gpt-oss-20B ($r_b=8.70$), Gemma-3-27B ($r_b=6.64$), and QwQ-32B ($r_b=4.39$). This explains why DeepSeek and gpt-oss perform best with larger $\beta$ values ($\beta \approx 10$), while Gemma and QWQ prefer smaller values ($\beta \approx 3$-$5$), consistent with our hyperparameter choices in the main experiments.

\begin{figure*}[!ht]
    \centering
    \includegraphics[width=1\linewidth]{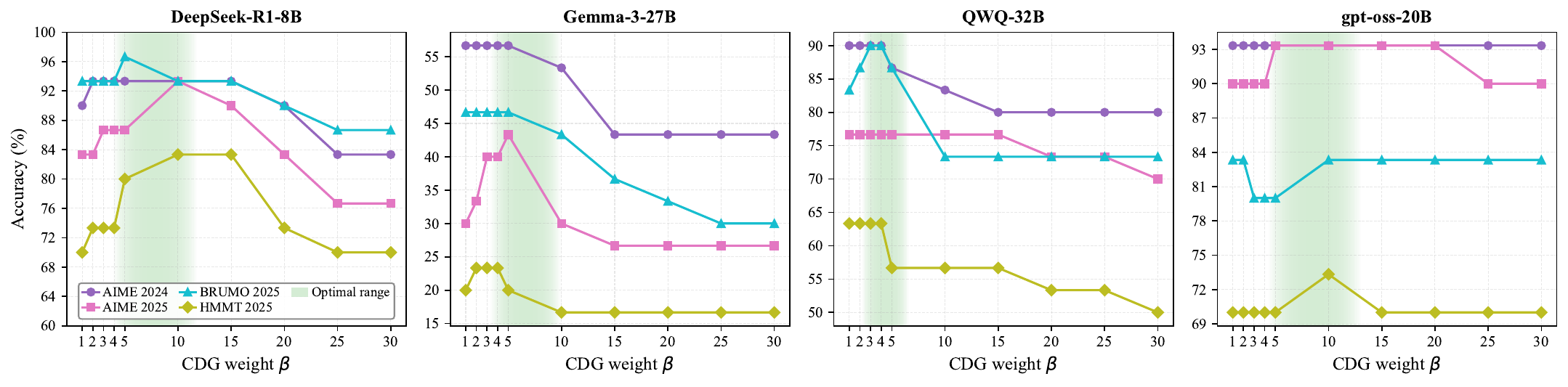}
    \caption{Extended $\beta$ ablation: Accuracy vs.\ CDG weight $\beta$ for all four models across all benchmarks. The green shaded region indicates the theoretically optimal $\beta$ range $[0.5r_b, 1.5r_b]$.}
    \label{fig:beta_ablation_4models}
\end{figure*}

\subsubsection{Position Percentile $P$ Ablation}

Figure~\ref{fig:position_ablation_4models} shows the position percentile ablation study on $P$. Recall that the CDG is computed as $\bar{C}_{\text{tail}} - \bar{C}_{\text{head}}$, where $\bar{C}_{\text{tail}} = \text{mean}(\text{last } P\%)$ and $\bar{C}_{\text{head}} = \text{mean}(\text{first } P\%)$ of the confidence trajectory. This ablation sweeps $P \in \{5, 10, 15, 20, 25, 30\}$ while keeping $\alpha$ and $\beta$ fixed at their optimal values. The relatively flat accuracy curves across all models and benchmarks demonstrate that CDG is robust to the choice of position percentile, requiring no careful tuning of this hyperparameter.

\begin{figure*}[!ht]
    \centering
    \includegraphics[width=1\linewidth]{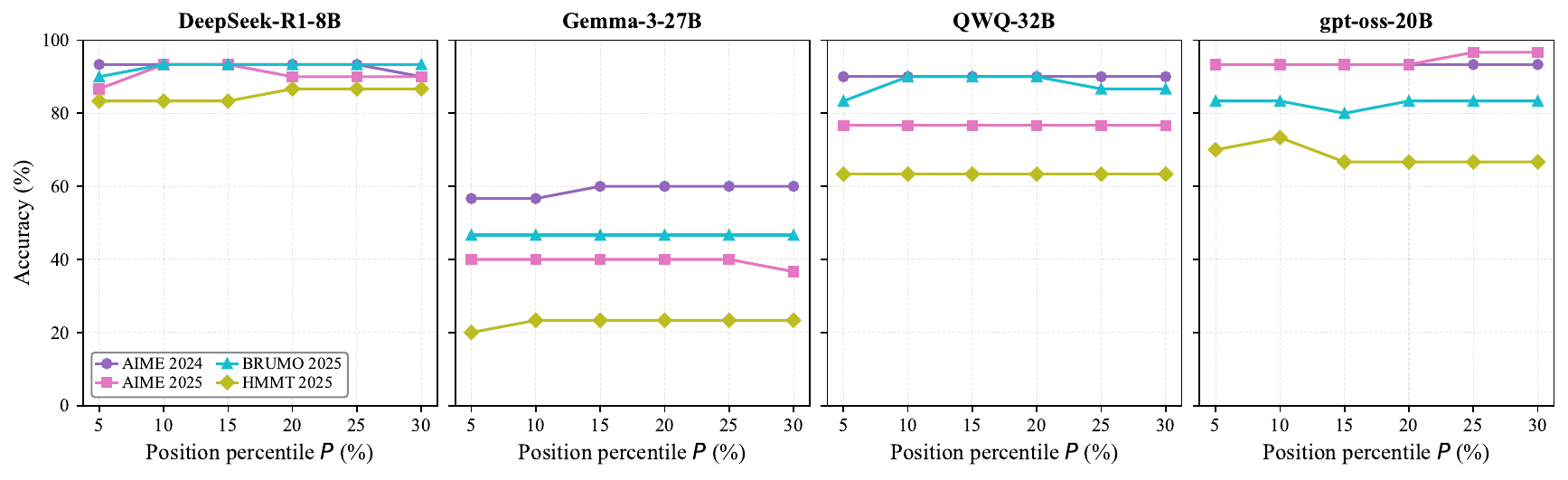}
    \caption{Position percentile ablation: Accuracy vs.\ position percentile $P$ for all four models across all benchmarks. The stable performance across $P \in \{5, 10, 15, 20, 25, 30\}$ demonstrates that CDG is robust to this hyperparameter choice.}
    \label{fig:position_ablation_4models}
\end{figure*}

\subsection{Statistical Significance of Confidence Dynamics}

Table~\ref{tab:significance_test} reports the statistical significance of the confidence dynamic gain (CDG) separation between correct and incorrect traces. For each model-dataset pair, we compute the mean CDG for correct traces ($\mu^+$) and incorrect traces ($\mu^-$), then perform a Mann-Whitney U test. All 16 configurations show highly significant differences ($p < 0.001$) with Cohen's $d$ effect sizes ranging from 0.39 to 1.45, confirming that the confidence dynamics pattern is robust and statistically reliable across all models and benchmarks.

\begin{table}[h]
\caption{Statistical significance of CDG separation between correct and incorrect traces. $\mu^+$: mean CDG for correct traces; $\mu^-$: mean CDG for incorrect traces; $d$: Cohen's effect size. Significance levels: *** denotes $p < 0.001$ (Mann-Whitney U test).}
\label{tab:significance_test}
\centering
\small
\begin{tabular}{llrrrrc}
\toprule
\textbf{Model} & \textbf{Dataset} & \textbf{$N_{\text{corr}}$} & \textbf{$\mu^+$} & \textbf{$\mu^-$} & \textbf{$d$} & \textbf{Sig.} \\
\midrule
\multirow{4}{*}{DeepSeek-R1-8B}
 & AIME 2024  & 13235 & 0.98 & $-$1.53 & 1.43 & *** \\
 & AIME 2025  & 11700 & 0.97 & $-$1.29 & 1.21 & *** \\
 & BRUMO 2025 & 12025 & 0.85 & $-$1.52 & 1.45 & *** \\
 & HMMT 2025  & 9017  & 0.73 & $-$0.96 & 0.92 & *** \\
\midrule
\multirow{4}{*}{Gemma-3-27B}
 & AIME 2024  & 4816  & 0.72 & $-$1.53 & 0.76 & *** \\
 & AIME 2025  & 3783  & 1.39 & $-$2.34 & 1.12 & *** \\
 & BRUMO 2025 & 5502  & 0.33 & $-$2.37 & 0.86 & *** \\
 & HMMT 2025  & 1656  & $-$0.04 & $-$3.06 & 1.02 & *** \\
\midrule
\multirow{4}{*}{QwQ-32B}
 & AIME 2024  & 12243 & 2.89 & $-$0.80 & 1.26 & *** \\
 & AIME 2025  & 10421 & 2.66 & $-$0.39 & 1.12 & *** \\
 & BRUMO 2025 & 11717 & 2.39 & $-$1.02 & 1.29 & *** \\
 & HMMT 2025  & 7134  & 1.71 & $-$1.00 & 1.12 & *** \\
\midrule
\multirow{4}{*}{gpt-oss-20B}
 & AIME 2024  & 11884 & 3.24 & 2.01 & 0.56 & *** \\
 & AIME 2025  & 10847 & 3.13 & 1.74 & 0.54 & *** \\
 & BRUMO 2025 & 9891  & 2.97 & 2.12 & 0.39 & *** \\
 & HMMT 2025  & 7933  & 2.46 & 1.41 & 0.47 & *** \\
\bottomrule
\end{tabular}
\end{table}

\subsection{Statistical Significance of CDG vs Baseline Methods}

Table~\ref{tab:significance_scaling} reports the statistical significance of CDG's improvement over baseline methods across the $4 \times 4$ model-dataset grid. We use a paired Wilcoxon signed-rank test (one-sided, testing CDG $>$ baseline) with 112 paired observations (4 models $\times$ 4 datasets $\times$ 7 trace counts). The Win/Tie/Loss (W/T/L) columns report the number of configurations where CDG outperforms, ties, or underperforms each baseline.

CDG significantly outperforms all three baselines: Majority voting ($p < 10^{-11}$), DeepConf-Mean ($p < 10^{-10}$), and DeepConf-Tail ($p < 10^{-10}$). The effect sizes are medium (Cohen's $d \approx 0.7$), indicating practically meaningful improvements. Notably, CDG wins in approximately 70\% of all comparisons against DeepConf-Tail (the strongest baseline), with a mean accuracy improvement of $+2.36\%$.

Per-model analysis shows consistent results: all four models exhibit significant improvement over DeepConf-Tail, with QwQ-32B showing the largest effect ($d = 1.54$, 24 wins, 0 losses) and Gemma-3-27B showing the smallest but still significant effect ($d = 0.44$, $p = 0.013$).

\begin{table}[h]
\caption{Statistical significance of CDG improvement over baseline methods. Wilcoxon signed-rank test (one-sided, CDG $>$ baseline). W/T/L: Win/Tie/Loss counts. $d$: Cohen's effect size. Significance: $^{***}p<0.001$, $^{**}p<0.01$, $^{*}p<0.05$.}
\label{tab:significance_scaling}
\centering
\small
\begin{tabular}{l|ccccc}
\toprule
\textbf{Comparison} & $N$ & \textbf{Mean} $\Delta$ (\%) & \textbf{W/T/L} & $p$\textbf{-value} & $d$ \\
\midrule
\multicolumn{6}{l}{\textit{Aggregate (all model-dataset pairs)}} \\
CDG vs Majority & 112 & +3.12 & 82/14/16 & $2.4 \times 10^{-12}$ *** & 0.77 \\
CDG vs DeepConf-Mean & 112 & +2.71 & 78/15/19 & $7.7 \times 10^{-11}$ *** & 0.68 \\
CDG vs DeepConf-Tail & 112 & +2.36 & 79/17/16 & $6.6 \times 10^{-11}$ *** & 0.73 \\
\midrule
\multicolumn{6}{l}{\textit{Per-model (CDG vs DeepConf-Tail)}} \\
DeepSeek-R1-8B & 28 & +1.33 & 17/5/6 & $6.7 \times 10^{-3}$ ** & 0.49 \\
Gemma-3-27B & 28 & +1.76 & 19/3/6 & $1.3 \times 10^{-2}$ * & 0.44 \\
QwQ-32B & 28 & +3.10 & 24/4/0 & $8.8 \times 10^{-6}$ *** & 1.54 \\
gpt-oss-20B & 28 & +3.24 & 19/5/4 & $1.2 \times 10^{-4}$ *** & 0.88 \\
\bottomrule
\end{tabular}
\end{table}

\subsection{Question-Level Paired Analysis}
\label{sec:question_level}

To complement the aggregate significance tests in Tables~\ref{tab:significance_test} and \ref{tab:significance_scaling}, we report a question-level paired analysis at $L=512$ that examines CDG's behavior on a per-question basis.

\textbf{CDG vs.\ Majority Voting.} Across all $480$ questions ($4$ models $\times$ $4$ datasets $\times$ $30$ questions), CDG produces a correct answer on $358$ questions ($74.6\%$) compared to $332$ ($69.2\%$) for majority voting, an improvement of $+5.4\%$ that is highly significant under a paired $t$-test ($t=4.70$, $df=479$, $p<0.0001$). More importantly, the effect is not merely an aggregate shift: of the $32$ questions where the two methods disagree, CDG is correct on $29$ while majority voting is correct on only $3$. That is, conditional on disagreement, CDG selects the right answer $91\%$ of the time, indicating that the gain is concentrated on the questions for which the choice of voting rule actually matters. Against the stronger baseline DeepConf-Tail, CDG wins $15$ of $22$ disagreements ($68\%$).

\textbf{Per-question re-test of CDG separation.} Table~\ref{tab:significance_test} pools all individual traces when comparing the CDG scores of correct and incorrect traces. To verify that this separation is not driven by a few highly sampled questions, we additionally aggregated at the question level: for each question, we compute the mean CDG score of its correct traces and the mean of its incorrect traces, and then run the significance test \emph{across questions} ($n=30$ per model--dataset setting). Under this stricter aggregation, all $16$ model--dataset configurations remain significant at $p<0.01$, confirming that the correct-vs-incorrect CDG gap is a per-question phenomenon rather than a sample-size artifact.

\subsection{Trace Length Ablation}
\label{sec:trace_length}

Table~\ref{tab:trace_length} reports the CDG-vs-majority-voting accuracy improvement (averaged across benchmarks) corresponding to the \textbf{Trace Length} analysis in the main paper. For each question, the 512 traces are split into a short pool (256 shortest traces) and a long pool (256 longest traces), and CDG is run independently on each pool. CDG retains a positive margin over majority voting in both pools, indicating that its advantage is driven by the confidence dynamics along the reasoning trajectory rather than by trace-length differences.

\begin{table}[h]
\caption{Average accuracy improvement (\%) of CDG over majority voting on the short and long trace pools (256 traces each, per question).}
\label{tab:trace_length}
\centering
\small
\begin{tabular}{lcc}
\toprule
\textbf{Model} & \textbf{Short (CDG vs Maj)} & \textbf{Long (CDG vs Maj)} \\
\midrule
DeepSeek & +4.2\% & +4.2\% \\
Gemma-3  & +3.3\% & +5.0\% \\
\bottomrule
\end{tabular}
\end{table}

\subsection{GPQA-Diamond Results}
\label{sec:gpqa_diamond}

To demonstrate that CDG generalizes beyond mathematical reasoning, we evaluate on \textbf{GPQA-Diamond} \citep{rein2024gpqa}, a graduate-level multiple-choice benchmark spanning physics, chemistry, and biology, using \textbf{DeepSeek-R1-8B} as the underlying reasoning model. Table~\ref{tab:gpqa_diamond} shows that CDG continues to outperform majority voting and both DeepConf variants on this non-mathematical reasoning benchmark.

We note that our absolute numbers on GPQA-Diamond differ somewhat from those reported in \citet{fu2025deepconf}. We conjecture this gap may stem from differences in the prompts used to elicit and extract final answers from the reasoning traces during evaluation; as the corresponding prompts for GPQA-Diamond are not publicly released, we are unable to exactly reproduce that pipeline. Importantly, the relative ordering of methods reported here is obtained under a single, consistent evaluation pipeline applied uniformly to all baselines and to CDG, so the comparison in Table~\ref{tab:gpqa_diamond} remains internally consistent.

\begin{table}[h]
\caption{Accuracy (\%) on GPQA-Diamond \citep{rein2024gpqa} with DeepSeek-R1-8B \citep{deepseekr1}. Best result in \textbf{bold}. DC-Mean/DC-Tail denote DeepConf-Mean/Tail with top $10\%$ filtering \citep{fu2025deepconf}; ``Majority'' is Majority Vote (Self-Consistency) \citep{wang2022self}.}
\label{tab:gpqa_diamond}
\centering
\small
\begin{tabular}{cccc}
\toprule
\textbf{Majority} & \textbf{DC-Mean} & \textbf{DC-Tail} & \textbf{CDG (ours)} \\
\midrule
68.2 & 68.7 & 70.7 & \textbf{72.2} \\
\bottomrule
\end{tabular}
\end{table}

\clearpage
\section{Implementation Details}

\subsection{Sampling Hyperparameters}

Table~\ref{tab:sampling_params} summarizes the sampling hyperparameters used for each model during trace generation. These settings follow the configurations from the respective model documentation and prior work \citep{fu2025deepconf}. We set hyperparameter $\text{logprobs}=20$ for vLLM across all models. 

\begin{table}[h]
\caption{Sampling hyperparameters for trace generation used in all experiments.}
\label{tab:sampling_params}
\centering
\small
\begin{tabular}{lcccc}
\toprule
\textbf{Model} & \textbf{Temperature} & \textbf{Top-p} & \textbf{Top-k} & \textbf{Max Tokens} \\
\midrule
DeepSeek-R1-8B & 0.6 & 0.95 & -- & 64,000 \\
Gemma-3-27B & 0.6 & 0.95 & 40 & 8,192 \\
QwQ-32B & 0.6 & 0.95 & 20 & 32,768 \\
gpt-oss-20B & 1.0 & 1.0 & 40 & 130,000 \\
\bottomrule
\end{tabular}
\end{table}

\subsection{CDG Hyperparameters}

For CDG voting, we use the following hyperparameters across all experiments:
\begin{itemize}
    \item Count dampening exponent: $\alpha = 0.5$    
    \item Position percentile for gradient: $P = 10\%$
    \item CDG weight $\beta$: $\beta = 10$ for DeepSeek-R1-8B and gpt-oss-20B; $\beta = 3$ for Gemma-3-27B and QwQ-32B
\end{itemize}

\subsection{Prompts}

For DeepSeek-R1-8B, we use the official system prompt in Chinese as specified in the model documentation. For all other models (Gemma-3-27B, QwQ-32B, gpt-oss-20B), we append the following instruction to each question:

\begin{verbatim}
Please reason step by step, and put your
final answer within \boxed{}.
\end{verbatim}

\subsection{Confidence Extraction}

Token-level confidence scores are computed following the definition in Eq.~(\ref{eq:defineconf}) of the main paper. We store the top-$K=20$ logprobs per token position and compute $C_t = -\frac{1}{K}\sum_{j \in \mathcal{K}_t} \log p(y_t=j \mid x, y_{<t})$ for each token $t$, where $\mathcal{K}_t$ denotes the set of top-$K$ tokens at position $t$. The confidence trajectory $\{C_t\}_{t=1}^{T}$ is then partitioned into $N=10$ position-normalized bins for CDG computation.

\subsection{Answer Extraction and Evaluation}

Final answers are extracted from the \verb|\boxed{}| format in model outputs. For mathematical equivalence checking, we use the Dynasor package's \texttt{math\_equal} function, which handles symbolic equivalence (e.g., $\frac{1}{2} = 0.5$).

\section{Computational Resources}

\begin{itemize}
    \item \textbf{Hardware}: 8 $\times$ NVIDIA H100 GPUs (80 GB VRAM each)
    \item \textbf{Total traces}: $512 \times 30 \times 4 = 61,440$ traces per model (512 traces $\times$ 30 questions $\times$ 4 benchmarks)
    \item \textbf{Total compute}: Approximately 8,600 GPU hours, including development iterations, baseline recreation and preliminary experiments. A single reproduction run requires significantly less compute.
\end{itemize}

\section{Dataset Details}

We evaluate on four mathematical reasoning benchmarks, each containing 30 problems with verifiable ground-truth answers:

\begin{itemize}
    \item \textbf{AIME 2024} \citep{matharena_2025}: American Invitational Mathematics Examination problems from 2024 (both AIME I and II combined). Source: \texttt{MathArena/aime\_2024\_I}, \texttt{MathArena/aime\_2024\_II}.
    \item \textbf{AIME 2025} \citep{matharena_2025}: American Invitational Mathematics Examination problems from 2025. Source: \texttt{MathArena/aime\_2025}.
    \item \textbf{HMMT 2025} \citep{hmmt2025}: Harvard-MIT Mathematics Tournament problems from February 2025. Source: \texttt{MathArena/hmmt\_feb\_2025}.
    \item \textbf{BRUMO 2025} \citep{brumo2025}: Bulgarian Mathematical Olympiad problems from 2025. Source: \texttt{MathArena/brumo\_2025}.
\end{itemize}

All datasets feature competition-level mathematics problems requiring multi-step reasoning, making them ideal for evaluating confidence dynamics in extended reasoning traces.


\end{document}